\DeclareMathOperator{\E}{\mathbb{E}}
\DeclareMathOperator{\Var}{Var}
\DeclareMathOperator{\relu}{ReLU}
\DeclareMathOperator{\sparsemax}{sparsemax}
\DeclareMathOperator{\softmax}{softmax}
\DeclareMathOperator{\h}{H}
\DeclareMathOperator{\diag}{diag}
\DeclareMathOperator{\vect}{vec}
\DeclareMathOperator{\entmax}{\alpha-entmax}
\DeclareMathOperator{\arelu}{\alpha-ReLU}
\newcommand{\zh}[1]{{\color{blue}{#1}}}
\newcommand{\wrong}[1]{{\color{red}{#1}}}
\newcommand{\corr}[1]{{\color{olive}{#1}}}
\newtheorem{lemma}{Lemma}
\title{Speeding Up Entmax}
\author[1]{Maxat Tezekbayev}
\author[2]{Vassilina Nikoulina}
\author[2]{Matthias Gall\'e}
\author[1]{Zhenisbek Assylbekov}
\affil[1]{Nazarbayev University}
\affil[ ]{\texttt{\{maxat.tezekbayev, zhassylbekov\}@nu.edu.kz}}
\affil[2]{NAVER LABS Europe}
\affil[ ]{\texttt{\{vassilina.nikoulina, matthias.galle\}@naverlabs.com}}
\begin{document}
\maketitle
\begin{abstract}
Softmax is the de facto standard for  normalizing logits in modern neural networks for language processing. 
However, by producing a  dense probability distribution each token in the vocabulary has a nonzero chance of being selected at each generation step, leading to a variety of reported problems in text generation. $\alpha$-entmax of \citet{DBLP:conf/acl/PetersNM19} solves this problem, but is 
unfortunately slower than softmax. 

In this paper, we propose an alternative to $\alpha$-entmax, which keeps its virtuous characteristics, but is as fast as optimized softmax and achieves on par or better performance in machine translation task. 
\end{abstract}

\section{Introduction}
Sparseness of vector representations is a desirable trait in neural network models for natural language processing (NLP): words (subwords) are discrete objects by their nature, and, accordingly, are encoded by one-hot embeddings at the input and output of neural networks. However, to predict a categorical response in neural models, softmax is most often used, which produces a \emph{dense} probability distribution, i.e. every category (word/subword) receives a non-zero probability.

Recent studies  suggest that it is this output density that poses problems when the trained NLP model is used for inference. For example, in the case of text generation, unconstrained sampling from a trained language model results in poor quality of the resulting text \cite{DBLP:conf/iclr/HoltzmanBDFC20}. In neural machine translation (NMT), exact decoding from a trained model often results in empty text \cite{DBLP:conf/emnlp/StahlbergB19}.\footnote{The authors called this phenomenon the \textit{cat got your tongue} problem.}
To get around these problems, constrained decoding techniques have been proposed, most of which artificially impose sparsity on softmax prediction. For example, \citet{DBLP:conf/acl/LewisDF18} propose to sample from the top-$k$ probable words, and \citet{DBLP:conf/iclr/HoltzmanBDFC20} propose to sample from the most probable words, which  comprise the cumulative probability $p$. While these methods are effective, they are ad-hoc solutions that lead to a mismatch between how the model is trained and how it is used at inference. 

In this regard, the works on sparse alternatives to softmax stand apart since they allow us to make inference from the model in the same way than it was trained. 
Some of the most successful and elegant solutions are sparsemax~\citep{DBLP:conf/icml/MartinsA16} and its generalization $\alpha$-entmax~\citep{DBLP:conf/acl/PetersNM19}. 
When coupled with suitable losses, these transformations are not inferior to softmax, and sometimes even surpass it as measured with final performance metrics on a number of tasks. 
A problem with these transformations however is that they are 
significantly 
slower than softmax when the number of categories (vocabulary size) is tens of thousands, as in the case of text generation.
This is because $\alpha$-entmax transformation---in its original formulation---requires sorting over the logits.\footnote{We also compare against an approximate version which only performs sorting on the highest values of the logits.}

In this work, we ask the question: \textit{is it possible to obtain a sparse output like that of $\alpha$-entmax, but without its degradation in computational speed?} Our answer is affirmative---we propose a sparse output transformation that
\begin{itemize}
    \item is on par or superior to softmax and $\alpha$-entmax in the NMT tasks,
    \item works as fast as softmax during training and at inference,
    \item gives the same training dynamics as $\alpha$-entmax (in training steps).
\end{itemize}
The most surprising thing is that such a transformation is simply a shifted ReLU raised to power $\frac{1}{\alpha-1}$, which we call $\boldsymbol\alpha$\textbf{-ReLU}.

The rest of the paper is organised as follows. In Sect.~\ref{sec:relu} we motivate the choice of $\alpha$-ReLU as the output transformation, and also select an appropriate loss function. In Sect.~\ref{sec:experiments} we experimentally confirm our claims about performance and  output speed of $\alpha$-ReLU in the NMT task. Sect.~\ref{sec:analysis} is devoted to a comparative analysis of $\alpha$-ReLU and $\alpha$-entmax in terms of sparsity, ability to solve the empty translation problem, and training dynamics.

\section{$\boldsymbol\alpha$-ReLU at Output}\label{sec:relu}
Our departure point is the $\alpha$-entmax transformation of \citet{DBLP:conf/acl/PetersNM19} which can be defined for $\mathbf{z}\in\mathbb{R}^d$ as
\begin{equation}
    \entmax_i(\mathbf{z})=[(\alpha-1)z_i-\tau(\mathbf{z})]_+^{\frac{1}{\alpha-1}},\label{eq:entmax}
\end{equation}
where $[x]_+:=\max\{x,0\}$, and $\tau:\mathbb{R}^d\to\mathbb{R}$ is the (unique) function that satisfies $\sum_j[(\alpha-1)z_j-\tau(\mathbf{z})]_+^{\frac{1}{\alpha-1}}=1$ for any $\mathbf{z}$. It is this threshold $\tau$ that makes the computation of $\alpha$-entmax slow, because one needs to sort the components of $\mathbf{z}$ to find $\tau$ \cite[Alg.~2]{DBLP:conf/acl/PetersNM19}.


As we can see, the threshold $\tau$ is only needed to ensure that  $\entmax(\mathbf{z})$ is a \emph{probability} distribution. 
We loosen this constraint, and only require  \emph{non-negative} weights, which is sufficient for most uses. 
Consider then a transformation
\begin{equation}
\arelu_i(\mathbf{z}):=[(\alpha-1)z_i-\tau]_+^{\frac{1}{\alpha-1}},
\end{equation}
where $\tau$ is a \textit{constant} that does not depend on $\mathbf{z}$. 
In order to
force $\arelu(\mathbf{z})$---applied to the logits $\mathbf{z}$---to converge to the one-hot vector $\mathbf{e}_y$ of the gold label $y$ we need to adjust the corresponding loss.
This can easily be done by feeding the logits $\mathbf{z}$ and the output $\arelu(\mathbf{z})$ into the following loss, which we call $\boldsymbol\alpha$\textbf{-ReLU loss}.
\begin{multline}
    \textstyle\ell(\mathbf{z},y)=(\arelu(\mathbf{z})-\mathbf{e}_y)^\top\left(\mathbf{z}-\frac{\tau}{\alpha-1}\mathbf{1}\right)\\+\h_\alpha[\arelu(\mathbf{z})],\label{eq:sp_loss}
\end{multline}
where $\h_\alpha[\mathbf{p}]:=\frac{1}{\alpha(\alpha-1)}\left(1-\sum_j p_j^\alpha\right)$, $\alpha\ne1$, is the Tsallis $\alpha$-entropy \cite{tsallis1988possible}, and $\mathbf{1}:=(1,\ldots,1)\in\mathbb{R}^d$ is a vector of ones. 
The rationale for coupling $\arelu$ with the loss \eqref{eq:sp_loss} is the following
\begin{lemma} \label{lem:gradient} For any $\tau\in\mathbb{R}$, the gradient of the $\alpha$-ReLU loss \eqref{eq:sp_loss} is given by
$$
\nabla_{\mathbf{z}}\ell(\mathbf{z},y)=\arelu(\mathbf{z})-\mathbf{e}_y.
$$
\end{lemma}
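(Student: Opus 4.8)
The plan is to prove the identity by direct differentiation, exploiting a feature that distinguishes $\arelu$ from $\alpha$-entmax: since $\tau$ here is a \emph{constant} rather than a function of $\mathbf{z}$, the map $\arelu$ acts coordinatewise, so its Jacobian is diagonal. First I would record that Jacobian. Writing $p_i:=\arelu_i(\mathbf{z})$ and letting $S=\{i:(\alpha-1)z_i>\tau\}$ denote the support, on $S$ we have $p_i=((\alpha-1)z_i-\tau)^{1/(\alpha-1)}$, hence $\partial p_i/\partial z_j=\delta_{ij}\,p_i^{2-\alpha}$, while off $S$ we have $p_i\equiv0$ and all partials vanish. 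The decisive algebraic fact is the defining relation $p_i^{\alpha-1}=(\alpha-1)z_i-\tau$ on $S$, equivalently $z_i-\tfrac{\tau}{\alpha-1}=\tfrac{1}{\alpha-1}p_i^{\alpha-1}$, which I will use to eliminate $\mathbf{z}$ in favour of $\mathbf{p}$ wherever a stray factor appears.

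Next I would differentiate the two summands of \eqref{eq:sp_loss} separately. For the bilinear term $(\mathbf{p}-\mathbf{e}_y)^\top(\mathbf{z}-\tfrac{\tau}{\alpha-1}\mathbf{1})$ the product rule splits the $k$-th partial into a ``direct'' part $p_k-(\mathbf{e}_y)_k$, obtained by differentiating the factor $\mathbf{z}-\tfrac{\tau}{\alpha-1}\mathbf{1}$, and an ``indirect'' part $p_k^{2-\alpha}\,(z_k-\tfrac{\tau}{\alpha-1})$ coming through the Jacobian of $\mathbf{p}$; substituting the defining relation collapses the indirect part to a simple multiple of $p_k$. For the entropy term I would apply the chain rule, $\partial_{z_k}\h_\alpha[\mathbf{p}]=\tfrac{1}{\alpha(\alpha-1)}\,(1-\alpha p_k^{\alpha-1})\,\partial_{z_k}p_k$, and again insert $\partial_{z_k}p_k=p_k^{2-\alpha}$.

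The heart of the argument is then to verify that the indirect contribution from the bilinear term and the contribution from the entropy term cancel exactly, leaving only the direct part $\mathbf{p}-\mathbf{e}_y$; this cancellation is precisely the reason the loss \eqref{eq:sp_loss} is \emph{paired} with $\arelu$, and it hinges on the exponents $\tfrac{1}{\alpha-1}$, $2-\alpha$ and $\alpha-1$ lining up so that every residual power of $p_k$ reduces to $p_k^{1}$. I expect this exponent bookkeeping to be the main obstacle — everything else is routine — and I would double-check it at the boundary of $S$, where $\arelu$ has a kink: there $p_i=0$, the contested terms drop out, and the claimed formula still reads $-(\mathbf{e}_y)_i$, so the statement survives (interpreting the gradient one-sidedly at the kink if necessary).

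As a conceptual sanity check I would note that \eqref{eq:sp_loss} is, up to an additive constant, a Fenchel--Young loss: $\arelu$ is the gradient of the convex conjugate (the support-function/value function) of the Tsallis regularizer restricted to the nonnegative orthant, and for any Fenchel--Young loss the gradient in the score vector is automatically $\hat{\mathbf{p}}(\mathbf{z})-\mathbf{e}_y$ by the envelope theorem. This variational viewpoint both explains \emph{why} the messy chain-rule terms must cancel and provides an independent confirmation of the direct computation.
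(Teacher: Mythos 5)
Your route is the same as the paper's --- compute the diagonal Jacobian $\partial_{z_j}\arelu_i(\mathbf{z})=\delta_{ij}\,p_i^{2-\alpha}$ on the support, use the defining relation $p_i^{\alpha-1}=(\alpha-1)z_i-\tau$, and show the chain-rule terms cancel --- but the one step you defer to ``exponent bookkeeping'' is exactly where the argument breaks as you have set it up. Differentiating the entropy literally from $\h_\alpha[\mathbf{p}]=\frac{1}{\alpha(\alpha-1)}\sum_j(p_j-p_j^\alpha)$ gives $\partial_{p_k}\h_\alpha=\frac{1}{\alpha(\alpha-1)}-\frac{1}{\alpha-1}p_k^{\alpha-1}$; after multiplying by $\partial_{z_k}p_k=p_k^{2-\alpha}$, the second piece contributes $-\frac{1}{\alpha-1}p_k$ and does cancel the indirect bilinear contribution $p_k^{2-\alpha}\bigl(z_k-\frac{\tau}{\alpha-1}\bigr)=\frac{1}{\alpha-1}p_k$, but the constant $\frac{1}{\alpha(\alpha-1)}$ survives. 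What your computation actually yields is
\begin{equation*}
\partial_{z_k}\ell(\mathbf{z},y)=p_k-(\mathbf{e}_y)_k+\tfrac{1}{\alpha(\alpha-1)}\,p_k^{2-\alpha}\quad\text{for }k\text{ in the support},
\end{equation*}
not the claimed identity; the cancellation you assert is not ``exact'' with the derivative you wrote down.

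The reason the paper gets a clean answer is that its proof first rewrites $\h_\alpha[\mathbf{p}]=\frac{1}{\alpha(\alpha-1)}\bigl(1-\sum_j p_j^\alpha\bigr)$ and differentiates that form, so that $\nabla_{\mathbf{p}}\h_\alpha=-\frac{1}{\alpha-1}\mathbf{p}^{\alpha-1}$ with no constant term. That rewriting uses $\sum_j p_j=1$, which holds for $\alpha$-entmax but not for $\arelu(\mathbf{z})$ --- dropping the normalization is the whole point of the construction --- and the two forms of $\h_\alpha$ differ in gradient by $\frac{1}{\alpha(\alpha-1)}\mathbf{J}^\top_{\arelu(\mathbf{z})}\mathbf{1}\neq\mathbf{0}$ off the simplex, which is precisely your residual. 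So your honest computation exposes an imprecision rather than proving the lemma: to make the statement true you must either read $\h_\alpha$ in \eqref{eq:sp_loss} as $\frac{1}{\alpha(\alpha-1)}\bigl(1-\sum_j p_j^\alpha\bigr)$, or keep the literal Tsallis entropy and absorb the constant into the threshold. Your own Fenchel--Young check points the same way: the conjugate-gradient map of the literal Tsallis regularizer on the nonnegative orthant is $\bigl[(\alpha-1)z_i+\frac{1}{\alpha}\bigr]_+^{1/(\alpha-1)}$, i.e.\ $\arelu$ with $\tau=-1/\alpha$, not arbitrary $\tau$. Your treatment of the boundary of the support and the diagonal-Jacobian computation are otherwise sound; the gap is confined to this one constant, but it is exactly the constant on which the advertised cancellation hinges.
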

\begin{proof}
The proof is in Appendix~\ref{app:loss_proof}.
\end{proof}

By Lemma~\ref{lem:gradient},  gradient-based minimization of $\ell$ indeed forces $\arelu(\mathbf{z})\to\mathbf{e}_y$. Notice that this is similar to what happens when the softmax normalization is coupled with the cross-entropy loss or when $\alpha$-entmax is coupled with the entmax loss. In both cases differentiating the loss with respect to logits gives $\mathbf{p}-\mathbf{e}_y$, where $\mathbf{p}$ is either $\softmax(\mathbf{z})$ or $\entmax(\mathbf{z})$ \cite{DBLP:conf/icml/MartinsA16,DBLP:conf/acl/PetersNM19}.

\paragraph{Remark.} Recall that $\alpha$-entmax is a generalization of sparsemax. For example, 2-entmax is essentially sparsemax, and for $\alpha\in(1,2)$ we get a smoothed version of sparsemax. Similarly, $\alpha$-ReLU is a kind of generalization of ReLU. So, the standard ReLU is 2-ReLU (with $\tau = 0$), and for $\alpha\in (1,2)$ we get a smoothed ReLU (see Fig.~\ref{fig:arelu}).
\begin{figure}[htbp]
    \centering
    \includegraphics[width=.45\textwidth]{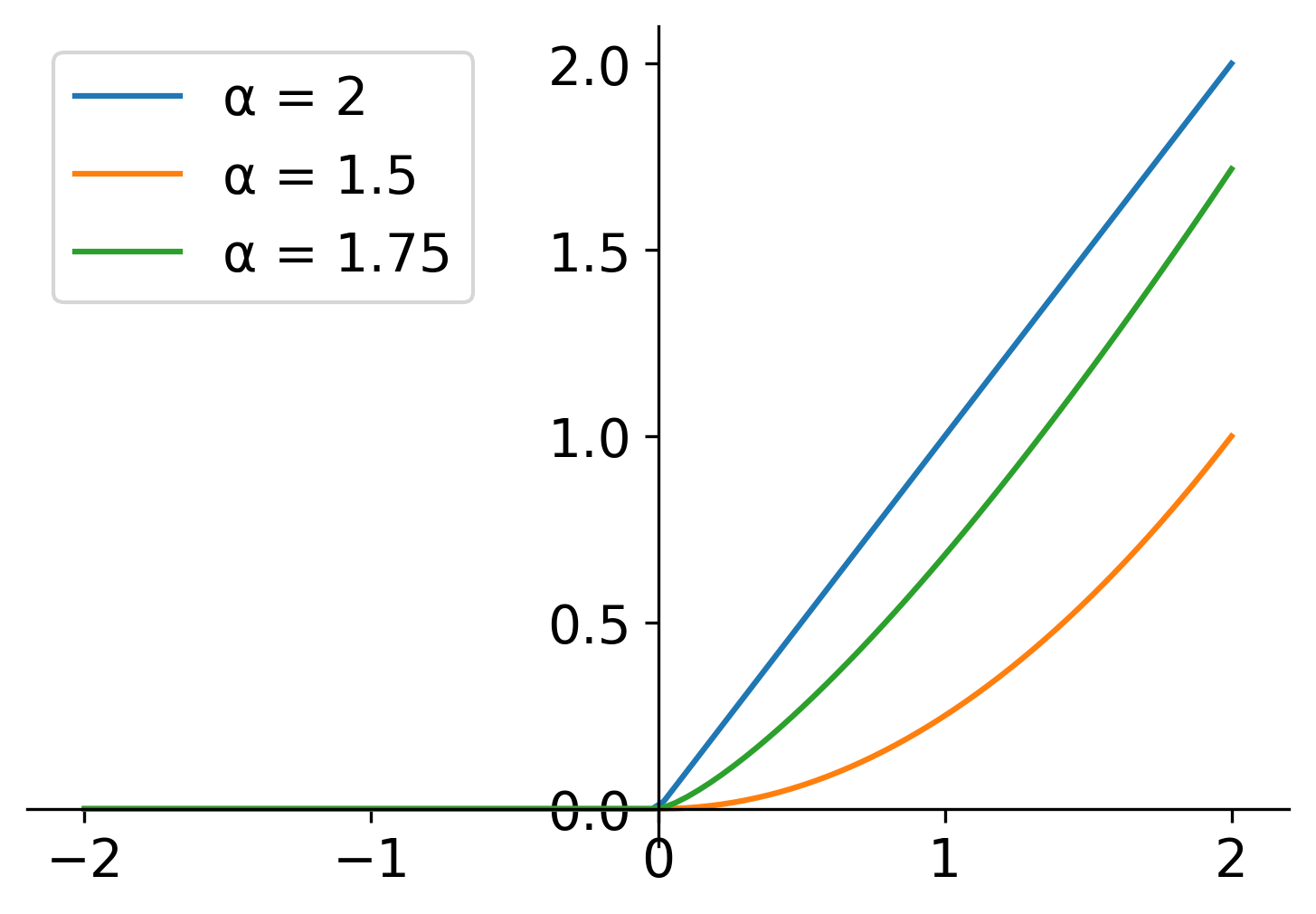}
    \caption{The graph of $\arelu(x)$ for several $\alpha\in(1,2]$, with $\tau=0$. 2-ReLU is a standard $\relu(x):=[x]_+$.}
    \label{fig:arelu}
\end{figure}

\begin{table*}[htbp]
    \centering
    \begin{tabular}{l l c c c}
    \toprule
        Output Transform & Loss & IWSLT De$\to$En & WMT En$\to$De & WMT En$\to$Ru \\
        \midrule
        softmax & cross-entropy & 35.3 & \textbf{28.7} & 22.4 \\
        sparsemax & sparsemax loss & 35.5 & 26.6 & 19.6 \\
        1.5-entmax & 1.5-entmax loss & 36.6 & 28.6 & 23.9\\ 
        1.5-entmax ($k=100$) & 1.5-entmax loss & 36.7 & 28.4 & 23.7 \\
        1.5-ReLU & 1.5-ReLU loss & \textbf{37.3} & 28.6 & \textbf{24.6} \\
        \midrule
        \multicolumn{2}{c}{\# Trainable parameters} & 47M & 75M & 75M \\
        \bottomrule
    \end{tabular}
    \caption{NMT results: comparison of softmax, sparsemax, 1.5-Entmax and the proposed 1.5-ReLU as the output transformations in the Transformer NMT model. Reported is detokenized test BLEU.}
    \label{tab:nmt_results}
\end{table*}

\section{Experiments}\label{sec:experiments}

In theory, nothing prevents $\alpha$-ReLU from learning what $\alpha$-entmax is learning. However, in practice we can have a different picture, because training is conditioned by many factors---the size of the dataset, the architecture of the neural network, the optimization algorithm, etc. In this section, we compare $\alpha$-ReLU empirically with $\alpha$-entmax (as well as with sparsemax and softmax), assuming all other factors are fixed. The goal of these experiments is to evaluate the consequences of using $\alpha$-ReLU as drop-in replacement for $\alpha$-entmax.

We test $\arelu$ at output in a neural machine translation task \cite{DBLP:conf/nips/SutskeverVL14}, which is essentially a conditional text generation task. 
Compared to open-ended text generation, there is a clearer metric of the quality of the generated text---the BLEU score \cite{DBLP:conf/acl/PapineniRWZ02}. As in open-ended text generation, at each prediction step, the NMT system needs to make a choice from all words (subwords) of the vocabulary, the size of which can reach several tens of thousands. Therefore, the sparsity of the output distribution becomes critical in such setups, since it can explicitly prevent the occurrence of most of the words that are inappropriate in the context. 


\subsection{Setup}\label{sec:setup}

\paragraph{Data.} We conduct experiments on three datasets of varied sizes:
\begin{itemize}
    \item IWSLT'14 De$\to$En \cite{cettolo2014report}, 172K training examples,
    \item WMT'14 En$\to$De \cite{DBLP:conf/wmt/BojarBFHKLMPPSS14}, 4.5M training examples,
    \item WMT'13 En$\to$Ru \cite{bojar-etal-2013-findings}, 1.3M tranining examples.\footnote{We did not use the Yandex 1M Parallel Corpus because of its license restrictions.}
\end{itemize}
We preprocess all datasets using the byte pair encoding algorithm \cite{DBLP:conf/acl/SennrichHB16a} with 10K merge operations on IWSLT, 40K merge operations on WMT En$\to$De, and 60K merge operations on WMT En$\to$Ru. We report detokenized case-sensitive BLEU with SacreBLEU \citep{post2018call}.\footnote{\textit{BLEU+case.mixed+lang.en-de+numrefs.1+smooth.exp+tok.13a+version.1.5.1}}

\paragraph{Hyperparameters $\alpha$ and $\tau$.} In all experiments we set $\alpha = 1.5$, because this value was recommended by \citet{DBLP:conf/acl/PetersNM19,DBLP:conf/naacl/PetersM21} as the middle ground between $\alpha = 1$ (softmax) and $\alpha = 2$ (sparsemax). 

The value for $\tau$ is chosen as follows: we run the first batch through a non-trained neural network, which has 1.5-entmax at the output, in the forward direction and determine the average $\tau$ value across the batch. This value is then used to train the 1.5-ReLU network. Our preliminary experiments have shown that 1.5-ReLU convergence is sensitive to the $\tau$ value, and that having output close to the probability distribution early in the learning phase works well with the rest of hyperparameters which are set to their default values.

\paragraph{Training.} We trained the Transformer Base \cite{DBLP:conf/nips/VaswaniSPUJGKP17} using the OpenNMT-py 2.0 toolkit \cite{klein-etal-2017-opennmt}. Optimization details are in Appendix~\ref{app:opt}.

\subsection{Results}
\label{sec:results} The results are given in  Table~\ref{tab:nmt_results}. Reported are test BLEU scores for best checkpoints which are selected based on validation BLEU. We observe
that the 1.5-ReLU performs on par with 1.5-entmax or better, while sparsemax is inferior to all others. 

\paragraph{Training Time.}

\begin{figure*}[htbp]
\begin{center}
\includegraphics[width=.32\textwidth]{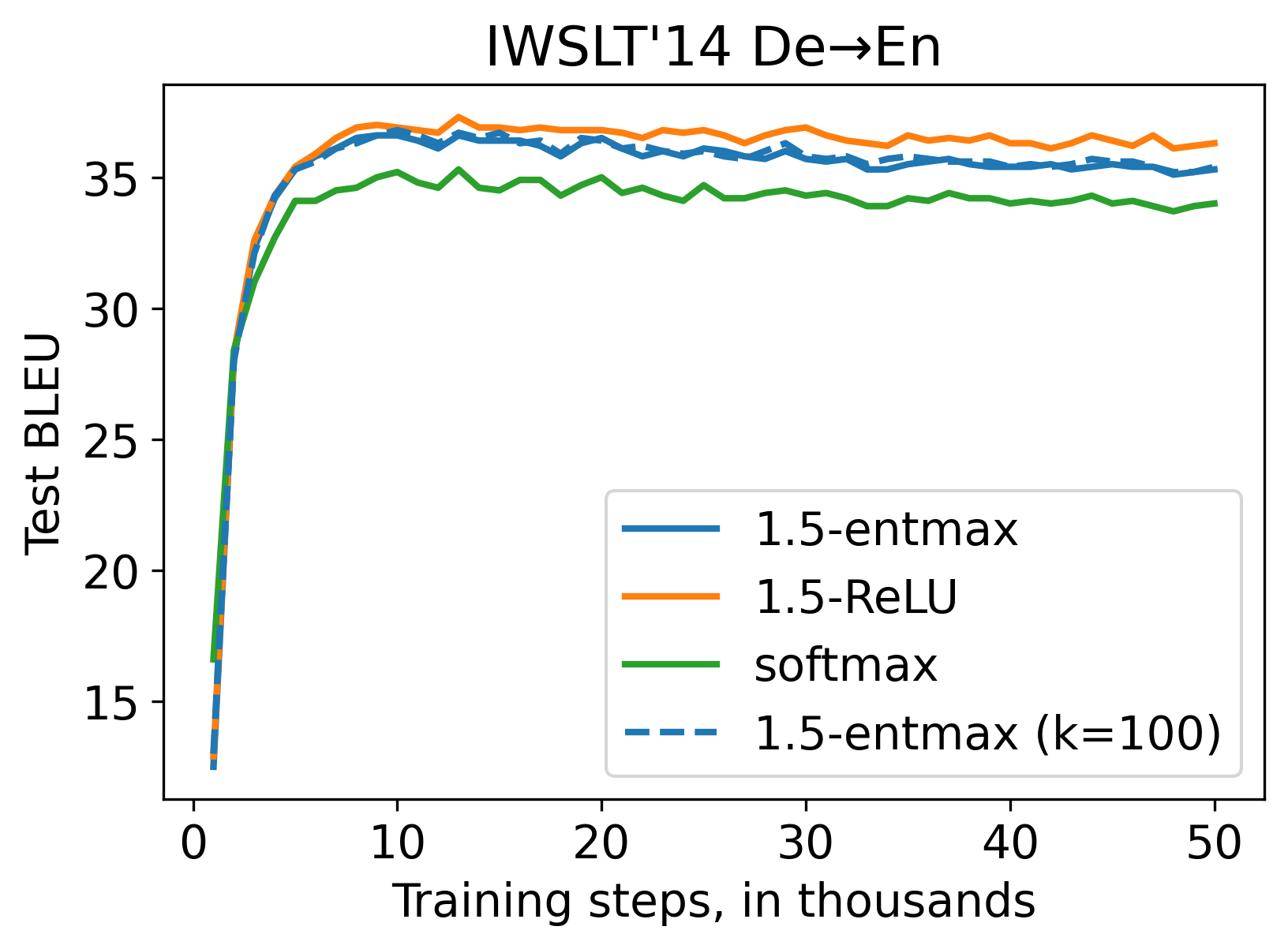}\hfill\includegraphics[width=.32\textwidth]{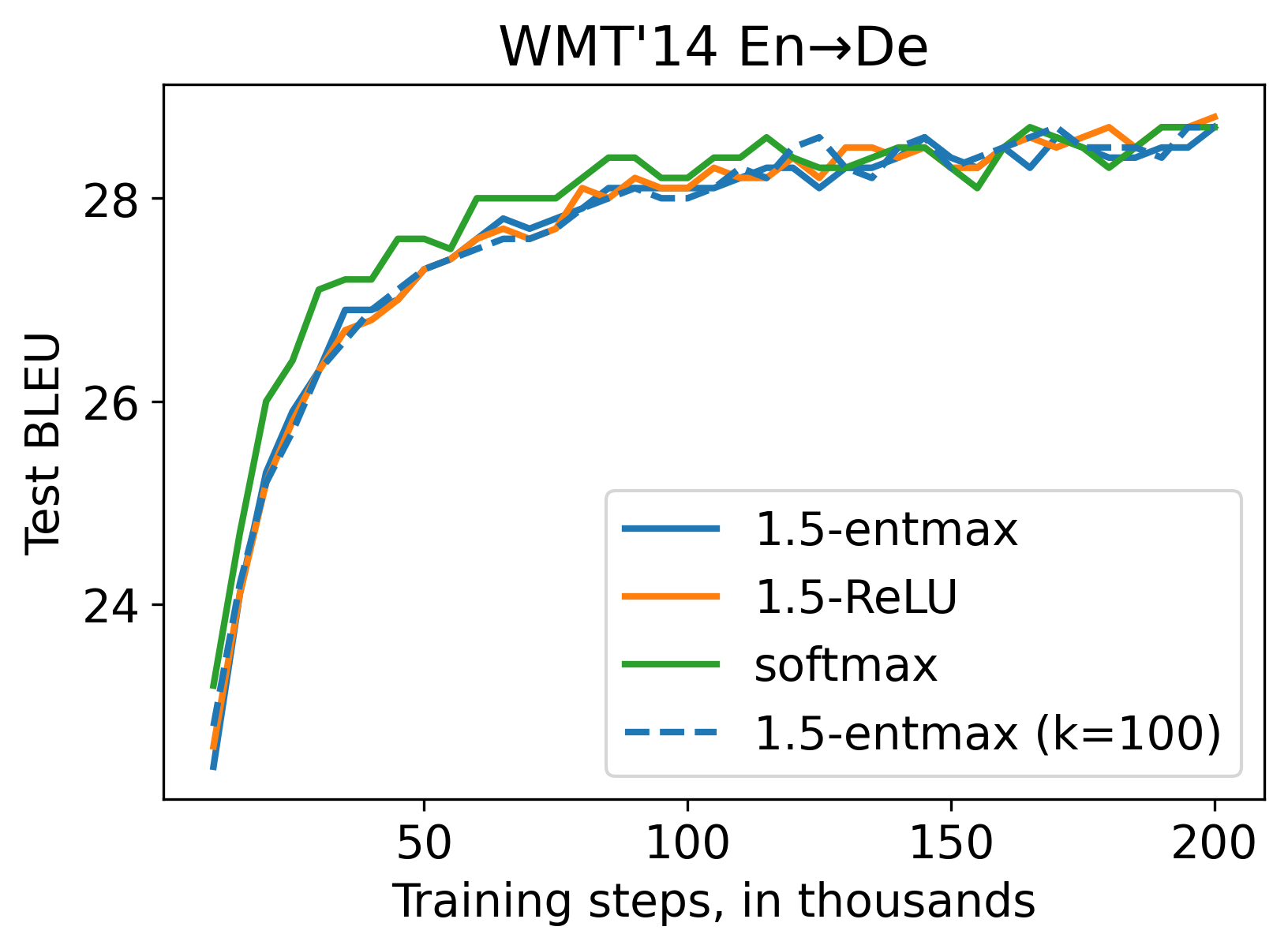}\hfill\includegraphics[width=.32\textwidth]{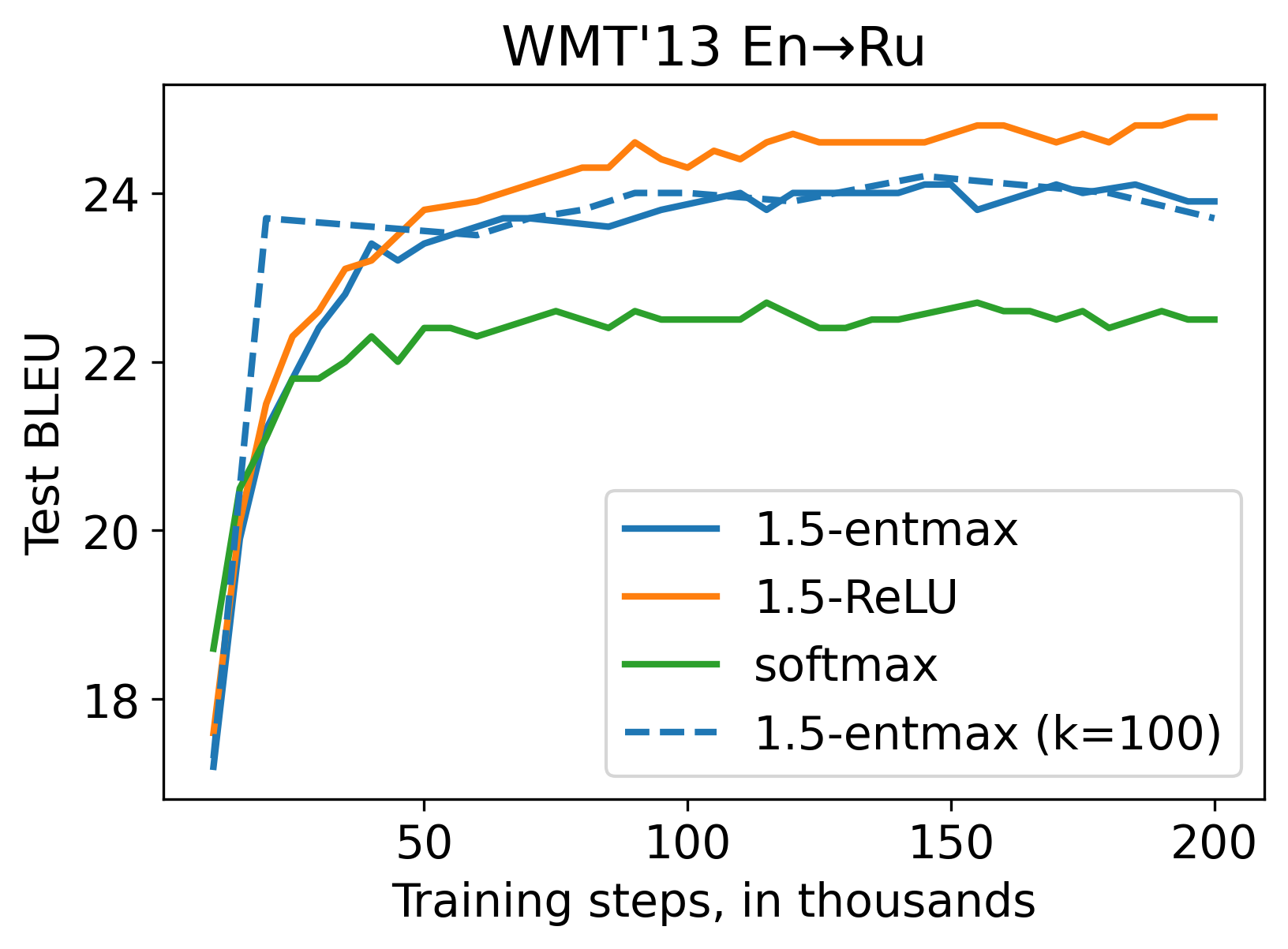}
\end{center}
\caption{Training dynamics in training steps.}
\label{fig:train_dyn_rel}
\end{figure*}

\begin{figure*}[htbp]
\begin{center}
\includegraphics[width=.32\textwidth]{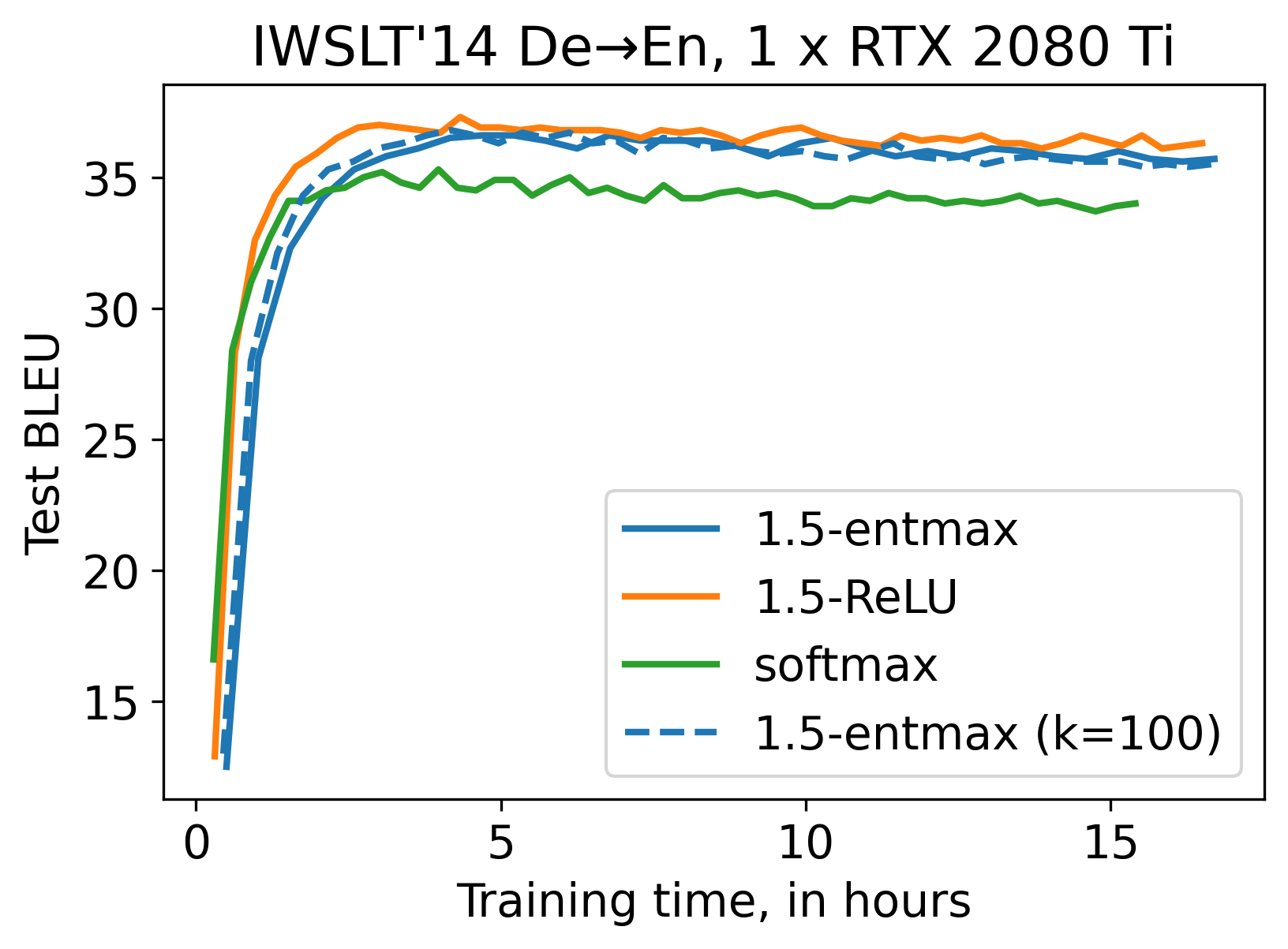}\hfill\includegraphics[width=.32\textwidth]{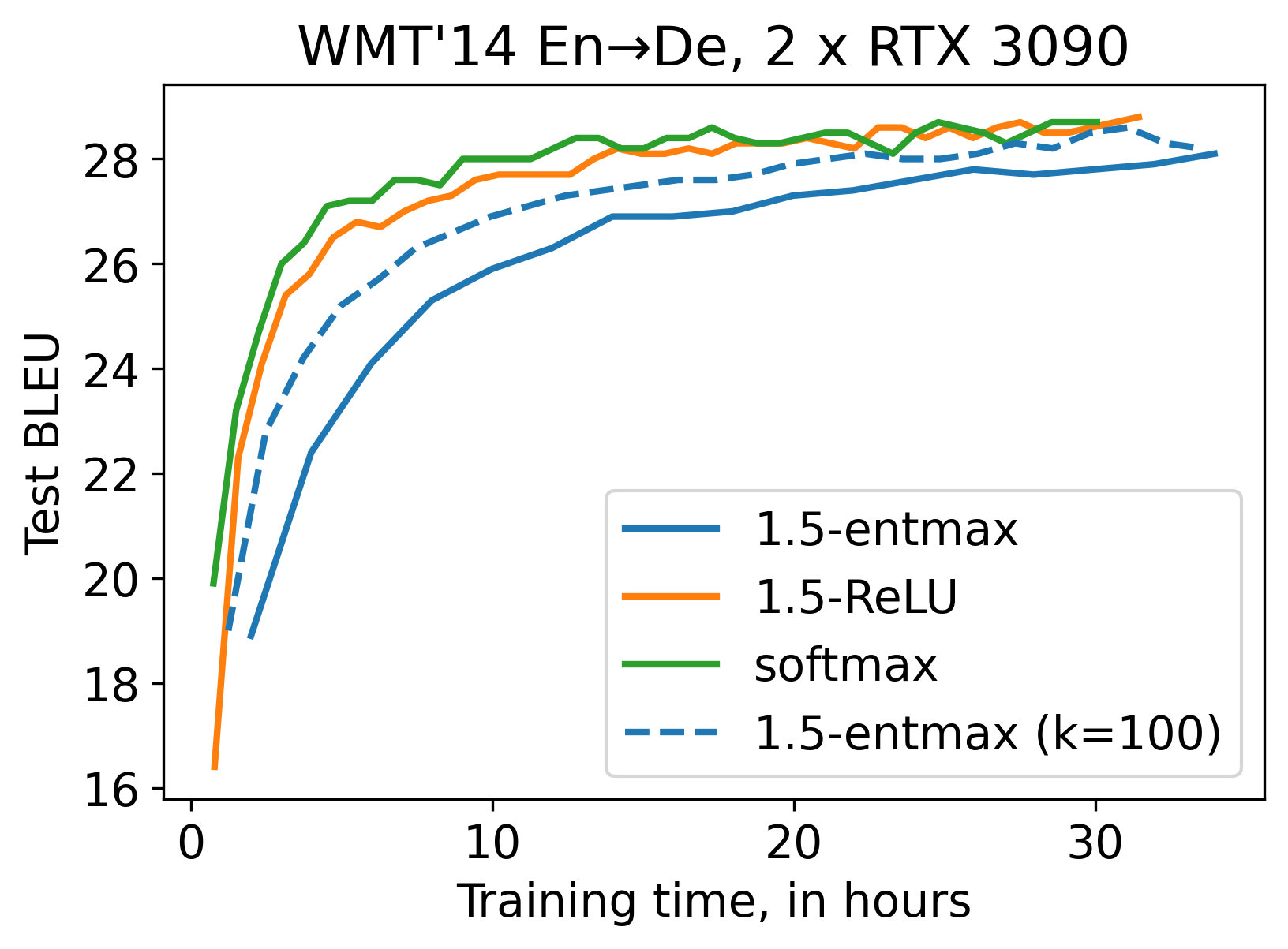}\hfill\includegraphics[width=.32\textwidth]{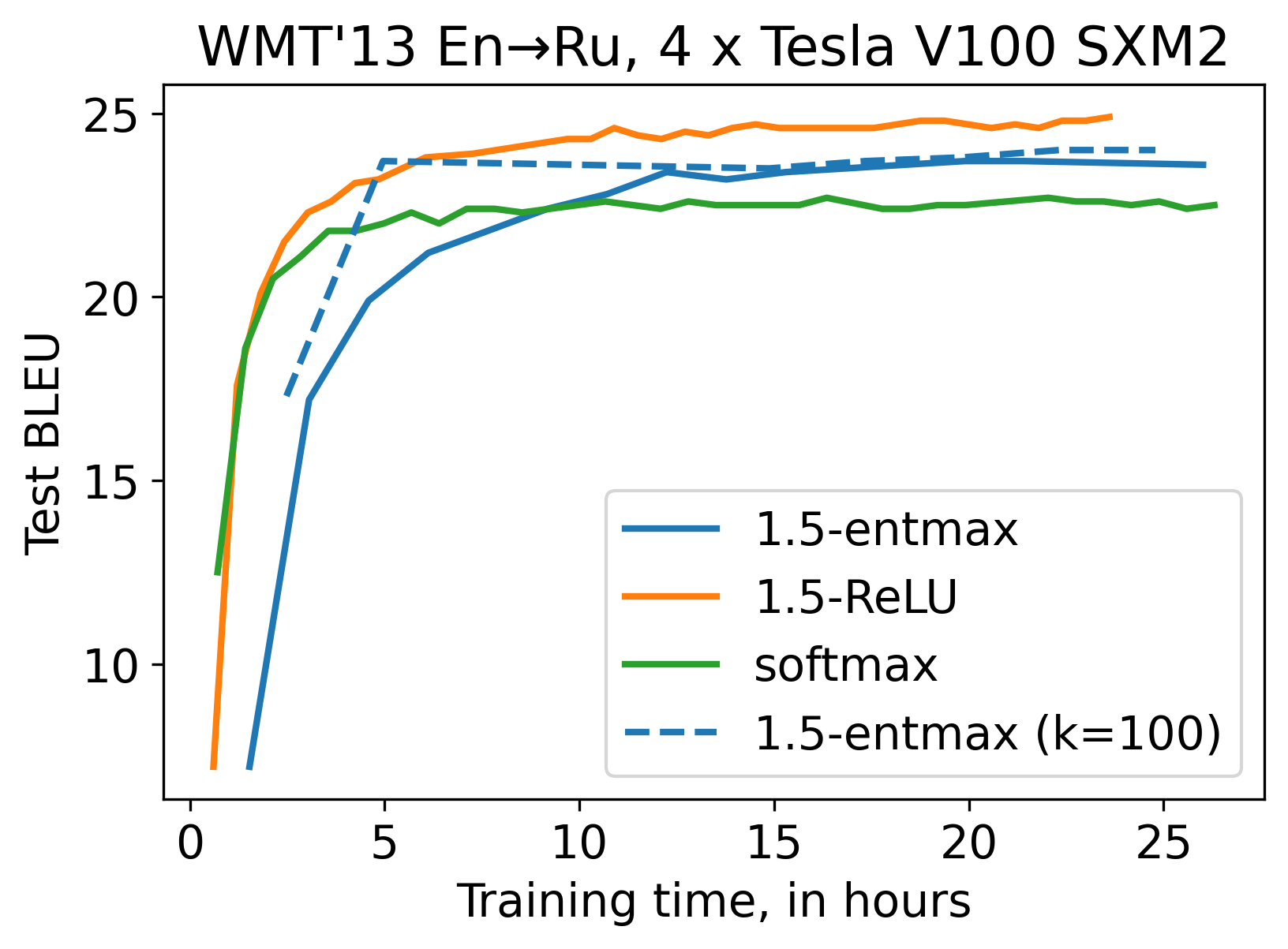}
\end{center}
\caption{Training dynamics in absolute time. \emph{1.5-entmax (k=100)} is a variant of 1.5-entmax in which sorting is performed only for the largest $k=100$ logits.} 
\label{fig:train_dyn_abs}
\end{figure*}

Fig.~\ref{fig:train_dyn_rel}\&\ref{fig:train_dyn_abs} show the training dynamics in training steps and in wall time on WMT'14 En$\to$De.  Despite the closeness of performance in intermediate steps and at the end of training, we see that on the larger datasets 1.5-entmax is slower in wall time  than softmax and 1.5-ReLU. 

To speed up the learning process, \citet{DBLP:conf/acl/PetersNM19} recommended limiting the number of sorted logits in the $\alpha$-entmax to the $k$ largest logits. We tried this  using $k = 100$, which is the default value in the author's implementation of $\alpha$-entmax.\footnote{\url{https://github.com/deep-spin/entmax}} The resulting training dynamics are shown as dashed curves in Fig.~\ref{fig:train_dyn_rel}\&\ref{fig:train_dyn_abs}. As we can see, partial sorting indeed speeds up the learning process, and at the same time does not harm the quality of the translation compared to $\alpha$-entmax with full sorting. But in the end, learning is still slower than in the case of 1.5-ReLU. Of course, one can try to select such $k$ that the speed of calculating the 1.5-entmax will be as close as possible to the speed of 1.5-ReLU without losing quality, but this requires additional efforts on the part of the user, and this must be done for each case separately. Also note that both 1.5-entmaxes (with full and partial sorting) cannot learn the English-Russian data set as well as 1.5-ReLU.

In this regard, 1.5-ReLU does not require additional fine-tuning, converges as fast as softmax in absolute time and performs on par or better. Thus 1.5-ReLU combines all three desired properties: computation speed, task performance, and sparsity of output.

\begin{figure}
    \centering
    \includegraphics[width=.45\textwidth]{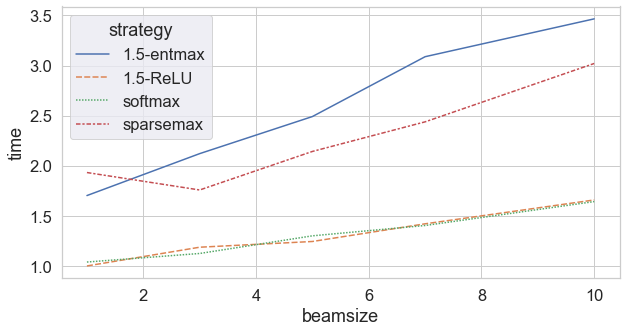}
    \caption{Normalized inference for WMT En$\to$Ru with different beam sizes.}
    \label{fig:infTime}
\end{figure}

\paragraph{Inference Time.} We measured inference time of translating the WMT En$\to$Ru test data with the different strategies and with different beam sizes.
The results---normalized by the smallest value---are shown in Fig.~\ref{fig:infTime}.
As can be seen the relative difference seems independent of the beam size: softmax is almost twice faster than 1.5-entmax (with full sorting over the logits). Even though the softmax version is optimized through the softmax CUDA kernel, it performs equivalent to the 1.5-ReLU model in terms of computation speed.

\section{Analysis}\label{sec:analysis}

\subsection{Empty Translations} 
We remind the reader that the \textit{cat got your tongue} problem \cite{DBLP:conf/emnlp/StahlbergB19} is one of the main motivations for using sparse transformations when generating text. As \citet{DBLP:conf/naacl/PetersM21} have shown, 1.5-entmax successfully tackles this problem by significantly lowering the proportion of cases where an empty string is more likely than the beam search hypothesis. For 1.5-ReLU, we also calculated this proportion, and compared it with the proportions for softmax and sparsemax (Table~\ref{tab:empty_perc}). As we see, 1.5-ReLU also successfully tackles the \textit{cat got your tongue} problem.

\begin{table}[htbp]
    \centering
    \begin{tabular}{l c c c c}
    \toprule
        Output & IWSLT  & WMT & WMT \\
        Transform & De$\to$En & En$\to$De & En$\to$Ru\\
        \midrule
        softmax & 7.5\% & 29.8\% & 31.7\%\\
        sparsemax & 0\% & 0.03\% & 0\%\\
        1.5-entmax & 0\% & 0.2\% & 0\%\\
        1.5-ReLU & 0\% & 0.3\% & 0.1\%\\
        \bottomrule
    \end{tabular}    
    \caption{Percentage of development set examples for which the model assigns higher probability to the empty string than to the beam-decoded hypothesis.}
    \label{tab:empty_perc}
\end{table}

\begin{figure*}
\begin{center}
\begin{minipage}[t]{.55\textwidth}
\begin{center}
\includegraphics[width=\textwidth]{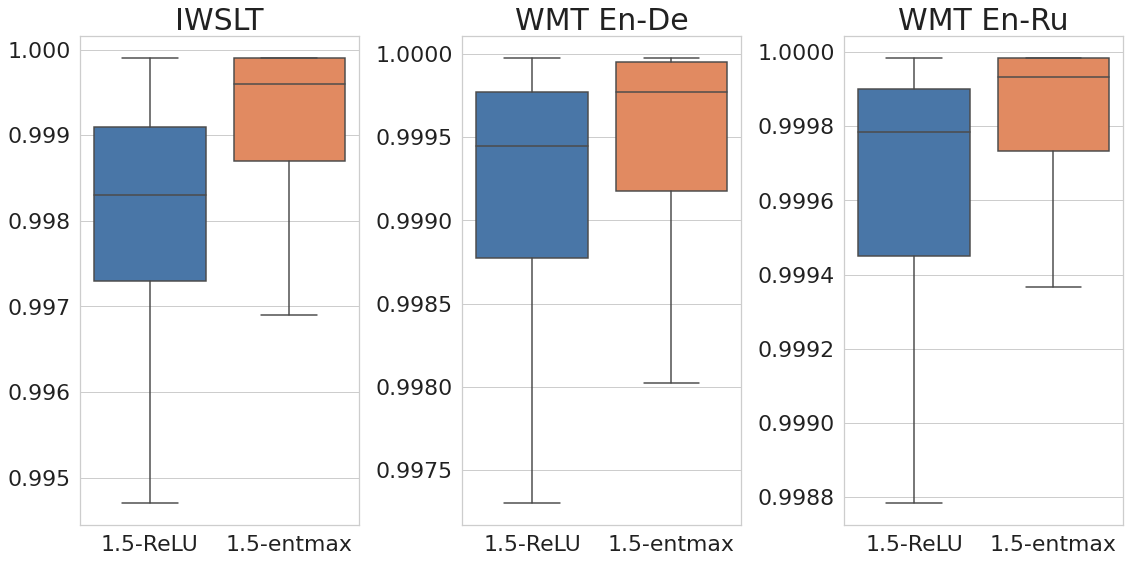}
\end{center}
\caption{Sparsity as proportion of zero components after applying 1.5-ReLU and  1.5-entmax, test sets.}
\label{fig:sparsity}
\end{minipage}\qquad\begin{minipage}[t]{.3\textwidth}
    \begin{center}
    \includegraphics[height=125pt]{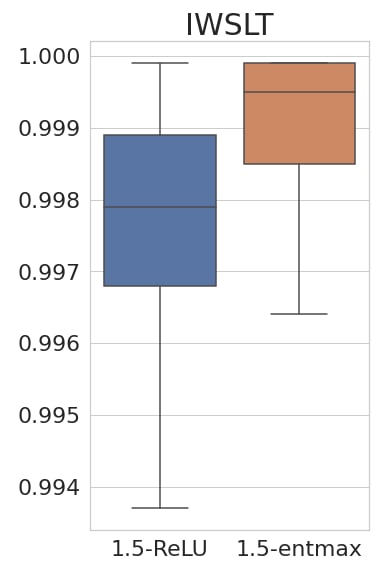}
    \end{center}
    \caption{Sparsity on training set.}
    \label{fig:sparsity_train}
\end{minipage}
\end{center}
\end{figure*}

\subsection{Sparsity}
To compare the sparsity of 1.5-ReLU and 1.5-entmax we depict in Fig.~\ref{fig:sparsity} the distributions of the number of zero components after applying these transformations (recall that for softmax all components are always nonzero). Since we constructed the $\alpha$-ReLU in such way that it mimics the $\alpha$-entmax (at least in the early stages of training), we expected that these two transformations would have similar properties, including sparsity. However, this is not the case: as we can see, the 1.5-ReLU is significantly less sparse than the 1.5-entmax. It is noteworthy that lower sparsity in this case correlates with a better performance in the translation task (see Table~\ref{tab:nmt_results}).  

A possible explanation for the difference in sparsity levels could be that $\alpha$-ReLU, in contrast to $\alpha$-entmax, behaves significantly differently on the test set than on the training set. However, this is not the case: for example, comparing the sparsity on the IWSLT training set (Fig.~\ref{fig:sparsity_train}), we see that the distributions of non-zero components are almost the same as on the test set for 1.5-ReLU and 1.5-entmax.

Note that the sparsity of $\alpha$-ReLU and $\alpha$-entmax is approximately the same at the beginning of training due to how we initialize $\tau$ in 1.5-ReLU (making it as close as possible to 1.5-entmax's $\tau$ in the untrained model, Sec.~\ref{sec:setup}). However, during training, $\arelu$'s $\tau$ remains \emph{fixed}, and the model can only adapt the logits themselves so that $\arelu(\mathbf{z})$ converges to the corresponding one-hot vector. At the same time, in $\entmax$, $\tau(\mathbf{z})$ adapts \emph{together} with logits $\mathbf{z}$. We hypothesize that during training, the entmax's $\tau(\mathbf{z})$ gradually increases which entails greater sparsity by the end of the training. However, the logits themselves also change during training, so the increase in $\tau$ may not be the cause of greater sparsity. To find out, we track the dynamics of mean logit norm $\|\mathbf{z}\|$ and mean $\tau$ during training for both 1.5-entmax and 1.5-ReLU (Fig.~\ref{fig:tau_z_evo}).
\begin{figure}[htbp]
    \centering
    \includegraphics[width=.48\textwidth]{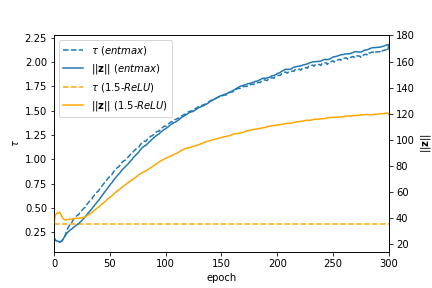}
    \caption{Evolution of the mean $\tau(\mathbf{z})$ and $\|\mathbf{z}\|$ during training for 1.5-entmax and 1.5-ReLU models on IWSLT'14 En$\to$De.}
    \label{fig:tau_z_evo}
\end{figure}
As we can see, the logit sizes grow in both cases. At the same time, the 1.5-entmax's $\tau(\mathbf{z})$ increases following the logit size, while the 1.5-ReLU's $\tau$ remains constant. From this we conclude that the sparsity of 1.5-entmax is inevitably less than the sparsity of 1.5-ReLU.

\subsection{Impact of $\tau$}
The selection of $\tau$ was described in Section~\ref{sec:setup}. However, the question arises: does the described approach lead to the choice of the \textit{optimal} $\tau$? To find out, we trained the $\alpha$-ReLU models for $\tau\in\{0, 0.1, 0.2, ..., 0.9, 1, 2, 5, 10\}$ on the IWSLT data. Note that all of these $\tau$'s have led to almost the same result at the end of the training (as predicted by Lemma~\ref{lem:gradient}). In Fig.~\ref{fig:tau}, we present the dynamics of early training only for $\tau\in\{0, 0.1, 0.2, 0.3, 5, 10\}$, since the curves for $\tau\in\{0.4, ..., 0.9, 1, 2\}$ practically coincided with the optimal curve corresponding to $\tau = 0.3$. 
\begin{figure}[htbp]
    \begin{center}
    \includegraphics[width=.47\textwidth]{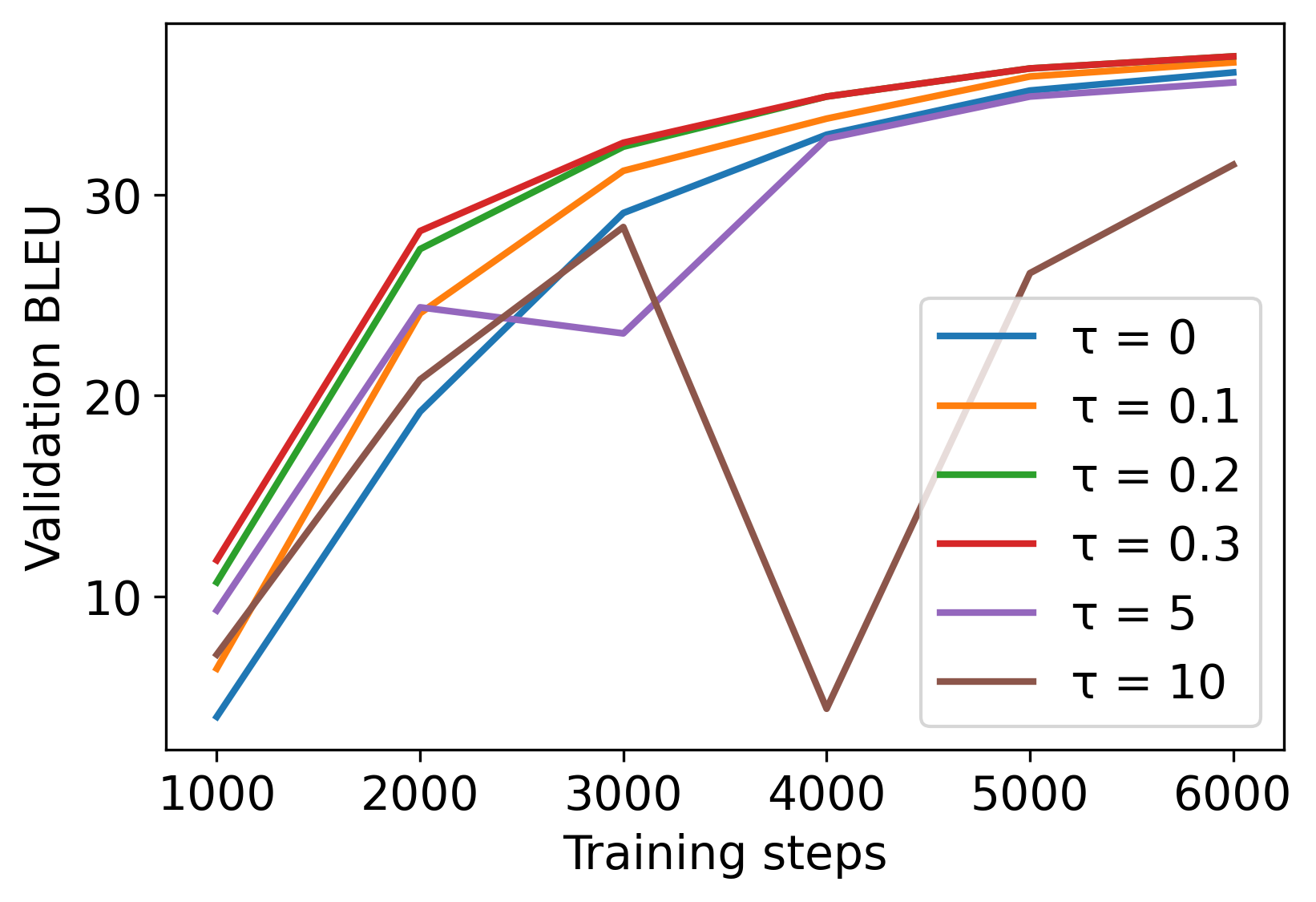}
    \end{center}
    \caption{Impact of $\tau$ on training dynamics, IWSLT'14 En$\to$De.}
    \label{fig:tau}
\end{figure}
Note that our $\tau$ selection method gave a value of $0.33$, thus we have no evidence against the adequacy of our method.

\subsection{Estimation of $\tau$ without data}
On closer inspection, we noticed that the pre-entmax logits in the \emph{untrained} Transformer model are distributed according to the normal law, regardless of what data is supplied to the input, Shapiro-Wilk test, p-value $> 0.15$. This allows us, using asymptotic theory, to estimate $\tau$ as
\begin{equation}
\hat{\tau}=\sqrt{\frac{d_{\text{model}}}{{2(d_{\text{model}}+d_{\text{vocab}})}}}\cdot \Phi^{-1}(1-p^\ast),\label{eq:tau_est}
\end{equation}
where $d_{\text{model}}$ is the size of hidden representations, $d_{\text{vocab}}$ is the  vocabulary size for a target language, $\Phi^{-1}(\cdot)$ is the probit function and $p^\ast$ is the solution of a non-linear equation that involves functions related to the standard normal distribution (see Appendix~\ref{app:tau} for details). Table \ref{tab:tau} compares the $\tilde{\tau}$ calculated by running data through an untrained model with the estimate $\hat\tau$ obtained from  \eqref{eq:tau_est}.
\begin{table}[htbp]
    \centering
    \begin{tabular}{l c c c}
    \toprule
    & IWSLT'14  & WMT'14  & WMT'13  \\
    & De$\to$En & En$\to$De & En$\to$Ru \\
    \midrule
    $d_{\text{model}}$ & 512 & 512 & 512 \\
    $d_{\text{vocab}}$ & 10,000 & 40,000 & 60,000\\
    $p^\ast$ & .0184 & .0171 & .0169\\
    \midrule
    $\tilde{\tau}$ & .33 & .17 & .14 \\
    $\hat{\tau}$ & .33 & .17 & .14 \\
    \bottomrule
    \end{tabular}
    \caption{Estimating threshold of 1.5-entmax: $\tilde{\tau}$ is a value obtained by running a data through an untrained model; $\hat\tau$ is an estimate based on asymptotic theory, i.e. without running the data through the model.}
    \label{tab:tau}
\end{table}
As we can see, $\hat\tau$ practically coincides with $\tilde{\tau}$ with an accuracy of two decimal places. Unfortunately, the formula \eqref{eq:tau_est} is not universal: it is only true for the  Transformer architecture.

\subsection{Self-normalization}
The attentive reader may have noticed that the output of $\alpha$-ReLU is not normalized, i.e. the components of $\arelu(\mathbf{z})$ do not have to sum up to 1. Accordingly, the question arises: how correct is it to compare translation scores at different steps of the beam-search decoding if the conditional probabilities are not normalized? However, the comparison is possible if the $\arelu(\mathbf{z})$ components add up to approximately the same number, i.e. if the model is self-normalizing. To check this, we ran the trained $\alpha$-ReLU model on the IWSLT and WMT'14 test sets, and looked at the distribution of $\sum_i\arelu_i(\mathbf{z})$ at each decoding step. The results are shown in Fig.~\ref{fig:self_norm}.
\begin{figure}[htbp]
    \centering
    \includegraphics[width=.45\textwidth]{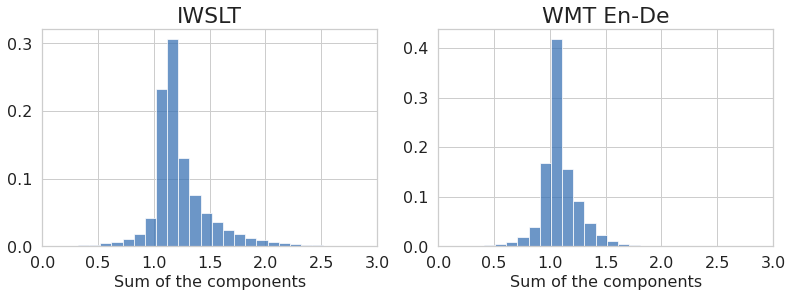}
    \caption{Distribution of the sum of  $\arelu(\mathbf{z})$ components across the IWSLT'14 and WMT'14 test sets: $\alpha$-ReLU self-normalizes.}
    \label{fig:self_norm}
\end{figure}
As we can see, the sum of the $\arelu(\mathbf{z})$ components  concentrates well around its mean $\approx1.24$ (IWSLT) and $1.09$ (WMT'14), which might indicate that the model indeed has a self-normalization property.

\subsection{Training Dynamics}
As we noted in Sect.~\ref{sec:results}, the training dynamics are similar in all three cases (softmax, 1.5-entmax, 1.5-ReLU) when time is measured in training steps. Here we attempt to explain this phenomenon through the recently proposed Neural Tangent Kernel (NTK) approach of \citet{DBLP:conf/nips/JacotHG18}. Roughly speaking, the NTK theory suggests that a sufficiently wide neural network trains like a kernel regression. We use this theory to show (in Appendix~\ref{app:dyn_proof}) that in all three cases the logits $\mathbf{z}(x,t)$ for a training instance $x$ at a training step $t$ evolve (approximately) according to the same differential equation
\begin{equation}
\frac{d\mathbf{z}}{dt}=-{\E}_{(x',y')}[\mathbf{K}_\sigma(x,x')\cdot(\sigma(\mathbf{z}')-\mathbf{e}_{y'})],\label{eq:mde}
\end{equation}
where expectation is over training examples $(x',y')$, $\sigma(\cdot)$ is one of the transformations considered (softmax, $\alpha$-entmax, or $\alpha$-ReLU), and $\mathbf{K}_\sigma(x,x')\in\mathbb{R}^{d\times d}$ is a positive semi-definite matrix that depends on $\sigma$. The Equation \eqref{eq:mde} is a non-linear matrix differential equation which in general cannot be solved analytically. However, it has an equilibrium point $\mathbf{z}(x,t)$ such that ${\E}_{(x',y')}[\mathbf{K}_\sigma(x,x')\cdot(\sigma(\mathbf{z}')-\mathbf{e}_{y'})]=\mathbf{0}$, thus its solution converges to this point as $t\to\infty$. This similarity in the evolution of $\sigma(\mathbf{z})$ implies the similarity in the evolution of the perfomance metric---such as BLEU---across all three transformations.

\subsection{Human Evaluation}

Although the BLEU metric \cite{DBLP:conf/acl/PapineniRWZ02} has stood the test of time, it is still an automated assessment of translation quality. To double-check the reliability of the results from Table~\ref{tab:nmt_results}, we decided to manually evaluate the translations from the WMT'13 En$\to$Ru test split. To do this, we followed the human evaluation setup from \cite{berard-etal-2019-machine}. We formed two random samples of 135 instances each and gave them to two annotators. 45 instances were shared across two samples in order to calculate inter-annotator agreement. Each instance consists of an original sentence in English and 4 candidate translations into Russian (reference, softmax, entmax, $\alpha$-ReLU). The annotators were to rate each translation on a 4-point scale. For annotation instructions, see Appendix~\ref{app:annot}.

The order of candidate translations was shuffled for each instance, so the annotators did not know which sentence is from which model. Nevertheless, the annotator always had a good chance of guessing which translation was the reference one, due to the large difference in quality between human and machine translation.

The results of human evaluation are shown in Table~\ref{tab:human}.
\begin{table}[]
    \centering
    \begin{tabular}{l c c}
        \toprule
        Model & Avg. Score & Std. Dev.\\
        \midrule
        Reference & 3.9 & 0.30 \\
        Softmax & 3.3 & 0.75\\
        1.5-entmax & 3.2 & 0.74\\
        1.5-ReLU & 3.3 & 0.74 \\
        \bottomrule
    \end{tabular}
    \caption{Results of Human Evaluation across 270 random examples (with repetitions) from WMT'13 En$\to$Ru test split. Scores are on a 4-point scale.}
    \label{tab:human}
\end{table}
Cohen's $\kappa = 0.56$, indicating moderate agreement between annotators. As we can see, all three models give approximately the same translation quality, and all three are significantly inferior to the reference translation. This is generally consistent with the results of 1.5-ReLU and 1.5-entmax in Table~\ref{tab:nmt_results}, but at the same time casts doubt on the softmax lag behind 1.5-ReLU and 1.5-entmax as the BLEU metric suggests.

In Appendix~\ref{app:trans_examples} we give a few examples where 1.5-ReLU translates better than 1.5-entmax and vice versa.

\section{Related Work}
\paragraph{Sparse seq2seq models.} Our proposed $\alpha$-ReLU transformation is based on the $\alpha$-entmax transformation of \citet{DBLP:conf/acl/PetersNM19}, which in turn is a generalization of the sparsemax transformation \cite{DBLP:conf/icml/MartinsA16}. In our work, we study sparseness at the output of a neural network. Nevertheless, there are a number of works aimed at sparsification within a neural network. For example, \citet{DBLP:conf/acl/MalaviyaFM18,DBLP:conf/acl/PetersNM19,DBLP:conf/emnlp/CorreiaNM19} show that sparsemax and $\alpha$-entmax can replace softmax in the attention mechanism with some success. A recent work of \citet{DBLP:conf/emnlp/ZhangTS21} attempted to replace softmax with a component-wise ReLU in the attention mechanism. Unfortunately, in its pure form, this replacement leads to the inability of the model to learn at all, since its loss function does not decrease during optimization. The authors solve this problem by adding a normalizing layer on top of the attention layer.

These and other works \cite{DBLP:journals/taslp/ZhangZLZ19} state that sparsity in the weights of attention produces more interpretable patterns. However, \citet{DBLP:conf/acl/MeisterLAC20} questioned this claim and were unable to find clear evidence to support it. Therefore, in this work, we focused on the application of $\alpha$-ReLU to the output of the transformer model, and not to the mechanism of attention, but at the same time we do not deny the possibility of studying the latter.

\paragraph{Self-normalization.} Self-normalizing training aims to bypass the need of normalization during inference time.
This is done by tweaking the learning mechanism so that the sum of all predictions sums (approximately) to a constant value.
Theoretical work on why this works is poorly understood~\citep{andreas2015accuracy} but early work in neural machine translation has shown its empirical value.
\citet{vaswani-etal-2013-decoding} achieves that by using noise-contrastive estimation (the neural model is used to re-rank the output of a hierarchical phrase-based machine translation system).
Noise-contrastive estimation is also the standard training mechanism for word2vec (more popular than the alternative hierarchical softmax), which also eschews any expensive normalization.
Differently, \citet{devlin-etal-2014-fast} changes the training loss to include a factor that encourages the normalizing factor to be 1.
At inference time, this is just assumed and decoding time is reported to achieve a 15x speed-up.

\section{Limitations and Risks}
We believe that the main limitations of our work are as follows:
\begin{itemize}
    \item $\alpha$-ReLU's output is still not a probability distribution, as required by the classical formulation of a probabilistic classification model.
    \item $\tau$ evaluation requires either running the data through an untrained model with $\alpha$-entmax at the output, or deriving a formula similar to \eqref{eq:tau_est} for each individual architecture.
    \item Our approach only works for the case when $\alpha$-ReLU is used at the output of the model, but it is not clear how to use it as an alternative to softmax/$\alpha$-entmax in the attention layer.
\end{itemize}
The last mentioned limitation leads to the potential risk of inability to learn if $\alpha$-ReLU is misused in the intermediate layers of the neural network such as attention layers. The experiments of \citet{DBLP:conf/emnlp/ZhangTS21} using vanilla ReLU (2-ReLU with $\tau = 0$ in our notation) instead of softmax to produce attention weights lead to a divergence of the loss function of the Transformer model. This translates into a waste of energy, especially when training large models on large datasets. Therefore, we believe that in the future, a preliminary mathematical analysis and/or experiments with small models on small datasets should be carried out as to why the unnormalized distribution of attention weights leads to the inability of the model to learn.

\section{Conclusion}
It seems that the sparsity of the output is natural for (sub)word prediction models. Nevertheless, sparsity does not have to come with slowdown of computations, as our work shows. The proposed transformation, $\alpha$-ReLU, gives a sparse output, shows competitive performance, and is as fast as softmax. 
The reduced dependency on the vocabulary size seems particularly important in translation, where neural models are moving more and more towards multi-lingual ones, which in general have a much higher vocabulary size in order to accommodate enough tokens for all languages.

A natural extension of this work will be the evaluation of $\alpha$-ReLU in the problem of open-ended text generation, as well as a replacement for softmax in the attention layers of Transformer models.

Our standalone implementation of $\alpha$-ReLU in PyTorch is available at \url{https://github.com/MaxatTezekbayev/alpha-relu}.

\section*{Acknowledgements}
The authors would like to thank Christopher Dance, Andr\'e Martins, Bill Byrne, Vlad Niculae and anonymous reviewers for their feedback. The work of Maxat Tezekbayev  and Zhenisbek Assylbekov is supported by the Nazarbayev University Collaborative Research Program 091019CRP2109.


\bibliography{ref}
\bibliographystyle{acl_natbib}

\onecolumn

\clearpage
\appendix

\section{Optimization}\label{app:opt}

\setlist{noitemsep}

\paragraph{IWSLT'14 De$\to$En}
\begin{itemize}
    \item Architecture: Transformer, embedding size 512, 6 layers, 8 heads, hidden size 1024, shared vocabulary.
    \item Batch size: 4096 tokens (with gradient accumulation for 8 steps).
    \item Optimizer: \textit{ADAM}, $\beta_1=0.9$, $\beta_2=0.998$, noam decay, learning rate $2.0$, 4000 warmup steps.
    \item Dropout: $0.3$
    \item No label smoothing.
\end{itemize}

\paragraph{WMT'14 En$\to$De}
\begin{itemize}
    \item Architecture: Transformer, embedding size 512, 6 layers, 8 heads, hidden size 2048,  shared vocabulary of 40K tokens, shared embeddings and decoder embeddings.
    \item Batch size: 4096 tokens (with gradient accumulation for 4 steps).
    \item Optimizer: \textit{ADAM}, $\beta_1=0.9$, $\beta_2=0.998$, noam decay, learning rate $2.0$, 8000 warmup steps, average decay 0.0005.
    \item Dropout: $0.1$.
    \item Attention dropout: $0.1$. 
    \item No label smoothing.
\end{itemize}

\paragraph{WMT'13 En$\to$Ru}
Same as in WMT'14 En$\to$De, except that Dropout is 0.3.

\subsection{GPU Power Consumption}
\begin{table}[htbp]
    \centering
    \begin{tabular}{l  c c c}
    \toprule
    Dataset & IWSLT'14 En$\to$De & WMT'14 De$\to$En & WMT'13 En$\to$Ru \\
    GPU(s) & 1 $\times$ RTX 2080 Ti & 2 $\times$ RTX 3090 & 4 $\times$ Tesla V100 SXM2 \\
    Power consumption, W & 250 & 2$\times$320 & 4$\times$300 \\
    \midrule
    & \multicolumn{3}{c}{Training time, hours} \\
    \midrule
    softmax & 15.41 & 30.06 & 28.43\\
    sparsemax & 24.43 & 73.33 & 58.82\\
    1.5-entmax & 26.11 & 79.89 & 61.20\\
    1.5-ReLU & 16.50 & 31.44 & 24.21\\
    \midrule
    TOTAL hours & 82.44 & 214.72 & 172.67\\
    TOTAL kW-hours & 20.61 & 137.42 & 207.20\\ 
    \midrule
    \textbf{GRAND TOTAL kW-hours} & \multicolumn{3}{c}{\textbf{365.23}} \\
    \bottomrule
    \end{tabular}
    \caption{Power consumed by GPUs for training.}
    \label{tab:my_label}
\end{table}

We do not report $CO_2$ consumption, as experiments were run in different countries, making aggregate statistics difficult to compute.
The largest experiment (on WMT'13), were run in France, which benefits from a very low $CO_2$ emission intensity in its electrical mix.

\section{Proofs}

\paragraph{Notation.} We let $\mathbb{R}$ denote the  real numbers. Bold-faced lowercase letters ($\mathbf{x}$) denote vectors in Euclidean space, bold-faced uppercase letters ($\mathbf{A}$) denote matrices, plain-faced lowercase letters ($x$) denote scalars, $\|\cdot\|$ denotes the Euclidean norm: $\|\mathbf{x}\|:=\sqrt{\mathbf{x}^\top\mathbf{x}}$. The gradient of $f:\mathbb{R}^d\to\mathbb{R}$ is denoted by $\nabla f$. The Jacobian of $\mathbf{z}\mapsto g(\mathbf{z})$ is denoted by $\mathbf{J}_g(\mathbf{z})$. 
Also, we denote $\relu(x):=[x]_+:=\max\{x,0\}$, $[d]:=\{1,\ldots,d\}$,  $\Delta^{d-1}:=\{\mathbf{p}\in\mathbb{R}^d\mid\sum_i p_i=1,\,p_i\ge0\}$, $\mathbf{e}_y:=(0,\ldots,0,1,0,\ldots,0)$ where $1$ is at $y^\text{th}$ position. 

\subsection{Proof of Lemma~\ref{lem:gradient}.} \label{app:loss_proof}

First, let us calculate the Jacobian of the mapping $\mathbf{z}\mapsto\arelu(\mathbf{z})$. Recall that
$$
\arelu_i(\mathbf{z}):=[(\alpha-1)z_i-\tau]_+^{\frac{1}{\alpha-1}}.
$$
Therefore, the partial derivatives are given by
\begin{align*}
\frac{\partial[\arelu_i(\mathbf{z})]}{\partial z_i}&=\frac{1}{\alpha-1}\cdot[(\alpha-1)z_i-\tau]_+^{\frac{1}{\alpha-1}-1}\cdot(\alpha-1)=[(\alpha-1)z_i-\tau]_+^{\frac{2-\alpha}{\alpha-1}}\\
&=[\arelu_i(\mathbf{z})]^{2-\alpha},\\
\frac{\partial[\arelu_i(\mathbf{z})]}{\partial z_j}&=0.\qquad i\ne j
\end{align*}
Thus, the Jacobian can be written concisely as
\begin{equation}
    \mathbf{J}_{\arelu}(\mathbf{z})=\diag\{[\arelu(\mathbf{z})]^{2-\alpha}\},\label{eq:jacobian}
\end{equation}
where raising to power is done component-wise (i.e. $\mathbf{x}^\beta=[x_1^\beta,\ldots,x_d^\beta]$), and $\diag[\mathbf{x}]$ is a diagonal matrix with $\mathbf{x}$ on its diagonal.

Recall the definition of the Tsallis $\alpha$-entropy:
$$
\h_\alpha[\mathbf{p}]:=\frac{1}{\alpha(\alpha-1)}\left(1-\sum_j p_j^\alpha\right).
$$
Its gradient w.r.t. $\mathbf{p}$ is
$$
\nabla_\mathbf{p}\h_\alpha[\mathbf{p}]=-\frac{1}{\alpha-1}\mathbf{p}^{\alpha-1},
$$
Combining this with \eqref{eq:jacobian}, and using the chain rule, we have
\begin{align}
\nabla_\mathbf{z}\h_\alpha[\arelu(\mathbf{z})]&=\left[\mathbf{J}_{\arelu}(\mathbf{z})\right]^\top\cdot\left(-\frac{1}{\alpha-1}[\arelu(\mathbf{z})]^{\alpha-1}\right)\notag\\
&=\left[\diag\{[\arelu(\mathbf{z})]^{2-\alpha}\}\right]^\top\cdot\left(-\frac{1}{\alpha-1}[\arelu(\mathbf{z})]^{\alpha-1}\right)\notag\\
&=-\frac{1}{\alpha-1}[\arelu(\mathbf{z})]^{2-\alpha}\odot[\arelu(\mathbf{z})]^{\alpha-1}\notag\\
&=-\frac{1}{\alpha-1}\arelu(\mathbf{z}),
\label{eq:grad_tsallis}
\end{align}
where $\odot$ is the Hadamard product (element-wise multiplication), and we used $\diag[\mathbf{x}]\cdot\mathbf{y}=\mathbf{x}\odot\mathbf{y}$. Taking into account  \eqref{eq:grad_tsallis}, the gradient of the $\alpha$-ReLU loss \eqref{eq:sp_loss} w.r.t. $\mathbf{z}$ is
\begin{align}
\textstyle\nabla_{\mathbf{z}}&\ell(\mathbf{z},y)=\nabla_\mathbf{z}\left[(\arelu(\mathbf{z})-\mathbf{e}_y)^\top\left(\mathbf{z}-\frac{\tau}{\alpha-1}\mathbf{1}\right)\right]+\nabla_\mathbf{z}\h_\alpha[\arelu(\mathbf{z})]\notag\\
&=(\arelu(\mathbf{z})-\mathbf{e}_y)+\mathbf{J}^\top_{\arelu(\mathbf{z})}\left(\mathbf{z}-\frac{\tau}{\alpha-1}\mathbf{1}\right)-\frac{1}{\alpha-1}\arelu(\mathbf{z})\notag\\
&=(\arelu(\mathbf{z})-\mathbf{e}_y)+\frac{1}{\alpha-1}\left[\diag\{[\arelu(\mathbf{z})]^{2-\alpha}\}\right]^\top[(\alpha-1)\mathbf{z}-\tau\mathbf{1}]-\frac{1}{\alpha-1}\arelu(\mathbf{z})\notag\\
&=(\arelu(\mathbf{z})-\mathbf{e}_y)+\frac{1}{\alpha-1}\underbrace{[(\alpha-1)\mathbf{z}-\tau\mathbf{1}]_+^{\frac{2-\alpha}{\alpha-1}}\odot[(\alpha-1)\mathbf{z}-\tau\mathbf{1}]}_{\arelu(\mathbf{z})}-\frac{1}{\alpha-1}\arelu(\mathbf{z})\notag\\
&=\arelu(\mathbf{z})-\mathbf{e}_y,\notag
\end{align}
where in the fourth line we used $[\mathbf{x}]_+^\beta\odot\mathbf{x}=[\mathbf{x}]_+^\beta\odot[\mathbf{x}]_+=[\mathbf{x}]_+^{\beta+1}$. This concludes the proof.

\subsection{Approximation of $\tau$ for 1.5-entmax}\label{app:tau}

We derive the formula \eqref{eq:tau_est} in two steps: first in Lemma~\ref{lem:tau_normal}, we approximate $\tau(\mathbf{z})$ of 1.5-entmax when $\mathbf{z}$ is an arbitrary random sample from the normal distribution with zero mean and variance $\sigma^2$; next in Lemma~\ref{lem:var_transformer}, we compute $\sigma^2$ for the case when $\mathbf{z}$ is the pre-softmax vector of logits in the Transformer model.

\begin{lemma} \label{lem:tau_normal} Let $z_1,\ldots,z_d$ be independent and identically distributed random variables from the normal distribution $\mathcal{N}(0,\sigma^2)$. Then the thresholding function of $1.5$-$\mathrm{entmax}(\mathbf{z})$ can be approximated as
$$
\tau(\mathbf{z})\approx\frac{\sigma}{2}\Phi^{-1}(1-p^\ast),
$$
where $\Phi^{-1}(\cdot)$ is a probit function, and $p^\ast$ is the solution of
$$
\Phi^{-1}(1-p)= m(p)-\sqrt{\frac{4}{\sigma^2}\cdot\frac{\epsilon}{p}-s(p)}
$$
with
\begin{align}
m(p)&:=\frac{1}{p-\epsilon}\left[\phi(\Phi^{-1}(x)\right]_{x=\epsilon}^{x=p}\label{eq:m}\\
s(p)&:=\frac{1}{p-\epsilon}\left[-\phi(\Phi^{-1}(x))\cdot\Phi^{-1}(x)+x\right]_{x=\epsilon}^{x=p}-[m(p)]^2\label{eq:s}\\
\phi(t)&:=\frac{1}{\sqrt{2\pi}}e^{-\frac{t^2}{2}}\notag\\
\epsilon&:=\frac1d\notag
\end{align}
\end{lemma}
\begin{proof}
Let $z_{(1)}\ge \ldots \ge z_{(d)}$ be a sorting of $z_1,\ldots,z_d$ in  descending order. \citet{DBLP:conf/acl/PetersNM19} showed that
\begin{equation}
\tau(\mathbf{z})=\frac{M(k)}2-\sqrt{\frac{1}{k}-\frac{S(k)}4},\label{eq:tau_peters}
\end{equation}
where $k\in[d]$ is any index that satisfies
\begin{equation}
    \frac{z_{(k)}}2\ge \frac{M(k)}2-\sqrt{\frac{1}{k}-\frac{S(k)}4}\ge\frac{z_{(k+1)}}{2}\quad\Leftrightarrow\quad z_{(k)}\ge M(k)-\sqrt{\frac{4}{k}-S(k)}\label{eq:cond_k}\ge z_{(k+1)}
\end{equation}
with
$$
    M(k):=\frac1k\sum_{i=1}^k z_{(i)},\qquad S(k):=\frac1k\sum_{i=1}^k z_{(i)}^2-[M(k)]^2.
$$
Approximating $z_{(i)}$ by its asymptotic mean $\sigma\Phi^{-1}\left(1-\frac{i}d\right)$ \cite{10.5555/1373327}, and denoting $\epsilon:=\frac1d$, $p:=\frac{k}d$, we have
\begin{align*}
    M(k)&\approx\frac1k\sum_{i=1}^k\sigma\Phi^{-1}\left(1-\frac{i}d\right)\approx\frac{\sigma}{p-\epsilon}\int_\epsilon^p\Phi^{-1}(1-x)dx=\frac{\sigma}{p-\epsilon}\int_\epsilon^p-\Phi^{-1}(x)dx\\
    &=\frac{\sigma}{p-\epsilon}\left[\phi(\Phi^{-1}(x))\right]_{x=\epsilon}^{x=p}=\sigma m(p),
\end{align*}
where we approximated the average of finitely many numbers $\{\Phi^{-1}(1-i/d)\}_{i=1}^k$ by the mean value of the function $\Phi^{-1}(1-x)$, and then we used the fact that $-\phi(\Phi^{-1}(x))$ is an antiderivative for the probit function $\Phi^{-1}(x)$; and $m(p)$ is defined by \eqref{eq:m}.

Similarly, for the second empirical moment, we have
\begin{align*}
    \frac1k\sum_{i=1}^k z_i^2&\approx\frac1k\sum_{i=1}^k\left[\sigma\Phi^{-1}\left(1-\frac{i}d\right)\right]^2\approx\frac{\sigma^2}{p-\epsilon}\int_{\epsilon}^p[\Phi^{-1}(1-x)]^2dx=\frac{\sigma^2}{p-\epsilon}\int_{\epsilon}^p[\Phi^{-1}(x)]^2dx\\
    &=\frac{\sigma^2}{p-\epsilon}\left[-\phi(\Phi^{-1}(x))\cdot\Phi^{-1}(x)+x\right]_{x=\epsilon}^{x=p},
\end{align*}
and thus
$$
S(k)\approx\frac{\sigma^2}{p-\epsilon}\left[-\phi(\Phi^{-1}(x))\cdot\Phi^{-1}(x)+x\right]_{x=\epsilon}^{x=p}-[m(p)]^2=\sigma^2 s(p),
$$
where $s(p)$ is defined by \eqref{eq:s}. Hence, finding $k\in[d]$ that satisfies \eqref{eq:cond_k} is (approximately) equivalent to finding $p\in(0,1)$ that satisfies
\begin{equation}
\sigma\Phi^{-1}(1-p)= \sigma m(p)-\sqrt{4\cdot\frac{\epsilon}p-\sigma^2 s(p)}\quad\Leftrightarrow\quad\Phi^{-1}(1-p)= m(p)-\sqrt{\frac{4}{\sigma^2}\cdot\frac{\epsilon}p-s(p)}.\label{eq:p_eq}
\end{equation}
Let $p^\ast$ be the solution of \eqref{eq:p_eq}. Then, taking into account \eqref{eq:tau_peters}, we have
$$
\tau(\mathbf{z})\approx\frac{\sigma m(p^\ast)}{2}-\sqrt{\frac{\epsilon}{p^\ast}-\frac{\sigma^2 s(p^\ast)}{4}}=\frac{\sigma}{2}\left(m(p^\ast)-\sqrt{\frac{4}{\sigma^2}\cdot\frac{\epsilon}{p^\ast}-s(p^\ast)}\right)=\frac{\sigma}{2}\Phi^{-1}(1-p^\ast),
$$
which concludes the proof.
\end{proof}

\begin{lemma} \label{lem:var_transformer}
Let $\mathbf{z}=\mathbf{Wx}$ be a pre-softmax vector of logits in the OpenNMT-py \cite{klein-etal-2017-opennmt} implementation of the Transformer model \cite{DBLP:conf/nips/VaswaniSPUJGKP17}. Then for any input, in a non-trained model the logits $z_1,\ldots,z_d$ are distributed according to the normal distribution $\mathcal{N}\left(0,\frac{2\cdot d_{\text{model}}}{d_{\text{model}}+d_{\text{vocab}}}\right)$, where $d_{\text{model}}$ is the size of hidden representations, and $d_{\text{vocab}}$ is the  vocabulary size for a target language.
\end{lemma}
\begin{proof}
The default Transformer configuration in OpenNMT-py implies that the elements $w_{ij}$ of $\mathbf{W}$ are initialized from a uniform distribution $\mathcal{U}[-a,a]$, where $a=\sqrt{\frac{6}{d_{\text{model}}+d_{\text{vocab}}}}$, thus
\begin{equation}
    \E[w_{ij}]=0,\qquad\Var[w_{ij}]=\frac{(2a)^2}{12}=\frac{a^2}{3}=\frac{2}{d_{\text{model}}+d_{\text{vocab}}}\label{eq:w}
\end{equation}
Since $\mathbf{x}$ is the result of a layer normalization \cite{ba2016layer}, we have
\begin{equation}
    \frac{1}{d_{\text{model}}}\sum_{j=1}^{d_{\text{model}}} {x_j}=0,\qquad \frac{1}{d_{\text{model}}}\sum_{j=1}^{d_{\text{model}}}x_j^2=1\label{eq:x}
\end{equation}
Therefore, from \eqref{eq:w} and \eqref{eq:x}, we have
\begin{align*}
\E[z_i]&=\E\left[\sum_{j=1}^{d_{\text{model}}}w_{ij}x_j\right]=\sum_{j=1}^{d_{\text{model}}}\E[w_{ij}]\cdot x_j=0,\\
\Var[z_i]&=\Var\left[\sum_{j=1}^{d_{\text{model}}} w_{ij}x_j\right]=\frac{2}{d_{\text{model}}+d_{\text{vocab}}}\sum_{j=1}^{d_{\text{model}}}x_j^2=\frac{2\cdot d_{\text{model}}}{d_\text{model}+d_{\text{vocab}}}.
\end{align*}
Being a sum of independent random variables, by the Central Limit Theorem, each $z_i$ tends to normal distribution with the mean and variance above.
\end{proof}

\subsection{Derivation of the Equation~\eqref{eq:mde}}\label{app:dyn_proof}

We provide derivation for the case of $\alpha$-ReLU. Extension to $\alpha$-entmax and softmax is done analogously. Let $\mathbf{x}\in\mathbb{R}^{n_{0}}$ be the input vector. We define a feedforward neural network with $L-1$ hidden layers recursively:
\begin{align*}
    \mathbf{h}^{(0)}&=\mathbf{x}\\
    \mathbf{z}^{(k)}&=\frac{1}{\sqrt{n_{k-1}}}\mathbf{W}^{(k-1)}\mathbf{h}^{(k-1)},\\
    \mathbf{h}^{(k)}&=\sigma(\mathbf{z}^{(k)}),\quad k=1,\ldots,L-1
\end{align*}
where $\mathbf{W}^{(k-1)}\in\mathbb{R}^{{n_k}\times n_{k-1}}$ is the weight matrix in the $k^{\text{th}}$ hidden layer, and $\sigma(\cdot)$ is a nonlinear activation function applied element-wise. We consider the case of a multi-label classification, i.e. the output layer is a vector
$$
\mathbf{z}:=\mathbf{z}^{(L)}\in\mathbb{R}^d,
$$
which is fed into the $\alpha$-ReLU loss:
\begin{equation}
\ell(\mathbf{z},y)=(\arelu(\mathbf{z})-\mathbf{e}_y)^\top\left(\mathbf{z}-\frac{\tau}{\alpha-1}\mathbf{1}\right)\\+\h_\alpha[\arelu(\mathbf{z})],\label{eq:gen_loss}    
\end{equation}
where $\h_\alpha[\mathbf{p}]:=\frac{1}{\alpha(\alpha-1)}\sum_j(p_j-p_j^\alpha)$, $\alpha\ne1$, is the Tsallis $\alpha$-entropy \cite{tsallis1988possible}. Given a training sample $S:=\{({x},y)\}$ learning is performed by minimizing the training error
\begin{equation}
\mathcal{L}:=\E_{(x,y)\sim S}[\ell(\mathbf{z}(x),y)]\label{eq:train_err}
\end{equation}
with respect to the network parameters $\boldsymbol\theta:=\vect\left(\{\mathbf{W}^{(k-1)}\}_{k\in[L-1]}\right)$.
\begin{lemma}\label{lem:matrix_ode}
Let the training error \eqref{eq:train_err} be minimized by gradient descent with infinitesimally small learning rate. Let $\mathbf{z}(x,t)\in\mathbb{R}^d$ be the network output on any training instance $x$ at time $t$, and $y$ be the desired output. Then, as the widths of hidden layers $n_k\to\infty$, $\forall k\in[L-1]$, the output $\mathbf{z}(x,t)$ follows the following evolution
\begin{equation}
\frac{d\mathbf{z}}{dt}=-\underset{(x',y')\sim S}{\E}[\mathbf{K}(x,x')\cdot(\arelu(\mathbf{z}')-\mathbf{e}_{y'})],\label{eq:evo}
\end{equation}
where $\mathbf{K}(x,x')\in\mathbb{R}^{d\times d}$ is a positive semidefinite matrix, and $\mathbf{z}':=\mathbf{z}(x',t)$.
\end{lemma}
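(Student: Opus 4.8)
The plan is to reduce the evolution of the network output to the standard Neural Tangent Kernel (NTK) gradient flow, with the output transformation entering only through the loss gradient supplied by Lemma~\ref{lem:gradient}. I would start from the gradient flow on the parameters induced by gradient descent with infinitesimal learning rate,
$$\frac{d\boldsymbol\theta}{dt}=-\nabla_{\boldsymbol\theta}\mathcal{L}=-\underset{(x',y')\sim S}{\E}\left[\mathbf{J}_{\mathbf{z}(x')}^\top\,\nabla_{\mathbf{z}'}\ell(\mathbf{z}',y')\right],$$
where $\mathbf{J}_{\mathbf{z}(x')}$ is the Jacobian of the output with respect to $\boldsymbol\theta$ and the chain rule has been applied inside the expectation. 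Invoking Lemma~\ref{lem:gradient} to replace $\nabla_{\mathbf{z}'}\ell(\mathbf{z}',y')$ by $\arelu(\mathbf{z}')-\mathbf{e}_{y'}$ is the crucial simplification: it is exactly this identity that lets the argument go through unchanged for softmax, $\alpha$-entmax, and $\alpha$-ReLU, since each transformation is paired with a loss whose logit-gradient is the residual $\sigma(\mathbf{z})-\mathbf{e}_y$.

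Next I would differentiate the output $\mathbf{z}(x,t)$ along the flow. By the chain rule $\frac{d\mathbf{z}(x,t)}{dt}=\mathbf{J}_{\mathbf{z}(x)}\frac{d\boldsymbol\theta}{dt}$, and substituting the parameter flow gives
$$\frac{d\mathbf{z}(x,t)}{dt}=-\underset{(x',y')\sim S}{\E}\left[\mathbf{J}_{\mathbf{z}(x)}\mathbf{J}_{\mathbf{z}(x')}^\top\,(\arelu(\mathbf{z}')-\mathbf{e}_{y'})\right].$$
Defining $\mathbf{K}(x,x'):=\mathbf{J}_{\mathbf{z}(x)}\mathbf{J}_{\mathbf{z}(x')}^\top\in\mathbb{R}^{d\times d}$ recovers \eqref{eq:evo} exactly. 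Positive semidefiniteness is immediate on the diagonal, since $\mathbf{K}(x,x)=\mathbf{J}_{\mathbf{z}(x)}\mathbf{J}_{\mathbf{z}(x)}^\top$ is a Gram matrix; more to the point, the full kernel assembled over the training set is a Gram matrix of the stacked Jacobians and is therefore PSD as an operator, which is what guarantees the equilibrium claimed in the main text.

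The substantive content, and the main obstacle, is the infinite-width limit. Up to this point the ODE is exact but its right-hand side still depends on time through $\mathbf{K}$, so nothing has yet been gained. Here I would invoke the NTK theory of \citet{DBLP:conf/nips/JacotHG18}: under the $\tfrac{1}{\sqrt{n_{k-1}}}$ scaling used in the network definition and a suitable random initialization, as all hidden widths $n_k\to\infty$ the kernel $\mathbf{K}(x,x')$ converges in probability to a deterministic limit and, crucially, stays (approximately) constant along the training trajectory. This is precisely the ``approximately'' qualifier in the statement: replacing the time-varying $\mathbf{K}$ by its constant infinite-width limit turns the display above into a closed evolution for $\mathbf{z}$. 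The delicate points I would need to check are that the hidden activation $\sigma$ satisfies the regularity (e.g.\ Lipschitz/linear-growth) hypotheses required by the NTK convergence theorems, and that feeding the logits through the $\alpha$-ReLU loss does not spoil them; since the loss enters only through the bounded residual $\arelu(\mathbf{z}')-\mathbf{e}_{y'}$, the architecture-dependent kernel $\mathbf{K}$ is unaffected by the choice of output transformation, and the same limiting argument then applies verbatim to $\alpha$-entmax and softmax.
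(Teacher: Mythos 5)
Your proposal follows essentially the same route as the paper's own proof: pass to the gradient flow on $\boldsymbol\theta$, use Lemma~\ref{lem:gradient} to replace the loss gradient by the residual $\arelu(\mathbf{z}')-\mathbf{e}_{y'}$, apply the chain rule to get $\frac{d\mathbf{z}}{dt}=-\E[\mathbf{J}_{\mathbf{z}}\mathbf{J}_{\mathbf{z}'}^\top(\arelu(\mathbf{z}')-\mathbf{e}_{y'})]$, and invoke Theorem~1 of \citet{DBLP:conf/nips/JacotHG18} for convergence of the kernel to a deterministic limit in the infinite-width regime. Your remarks on positive semidefiniteness (only the stacked Gram operator is PSD, not each off-diagonal block) and on the regularity hypotheses needed for the NTK limit are in fact slightly more careful than the paper's treatment, but they do not change the argument.
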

\begin{proof}
From \eqref{eq:train_err} and Lemma~\ref{lem:gradient} we have
\begin{equation}
\nabla_{\mathbf{z}}\mathcal{L}=\nabla_{\mathbf{z}}\E_{(x',y')\sim S}[\ell(\mathbf{z}',y)]=\nabla_{\mathbf{z}}\ell(\mathbf{z},y)=\arelu(\mathbf{z})-\mathbf{e}_y,\label{eq:gen_loss_grad}
\end{equation}
where we denoted $\mathbf{z}:=\mathbf{z}(x,t)$ and $\mathbf{z}':=\mathbf{z}(x',t)$ for shorthand. Now, consider the gradient descent update
\begin{equation}
    \boldsymbol\theta_{t+\eta}=\boldsymbol\theta_t-\eta\nabla_{\boldsymbol\theta}\mathcal{L}\quad
    \Leftrightarrow\quad \frac{\boldsymbol\theta_{t+\eta}-\boldsymbol\theta_t}{\eta}=-\nabla_{\boldsymbol\theta}\mathcal{L},\label{eq:gd_upd}
\end{equation}
where $\eta$ is the learning rate. Taking the limit in \eqref{eq:gd_upd} as $\eta\to0$, we have:
\begin{align*}
\frac{d\boldsymbol\theta}{dt}=-\nabla_{\boldsymbol\theta}\mathcal{L}=-\E_{(x',y')\sim S}[\mathbf{J}^\top_{\mathbf{z}'}(\boldsymbol\theta)\cdot\nabla_{\mathbf{z}'}\mathcal{L}],
\end{align*}
where the last equality is due to the chain rule. Combining this with \eqref{eq:gen_loss_grad}, we get
\begin{equation}
    \frac{d\boldsymbol\theta}{dt}=-\E_{(x',y')\sim S}[\mathbf{J}^\top_{\mathbf{z}'}(\boldsymbol\theta)\cdot(\arelu(\mathbf{z}')-\mathbf{e}_{y'})]\label{eq:theta_dyn}
\end{equation}
Applying the chain rule again, and using \eqref{eq:theta_dyn}, we have
\begin{equation}
\frac{d\mathbf{z}}{dt}=\mathbf{J}_{\mathbf{z}}(\boldsymbol\theta)\cdot\frac{d\boldsymbol\theta}{dt}\notag=-\E_{(x',y')\sim S}[\underbrace{\mathbf{J}_{\mathbf{z}}(\boldsymbol\theta)\mathbf{J}^\top_{\mathbf{z}'}(\boldsymbol{\theta})}_{\mathbf{K}(x,x';\boldsymbol\theta)}\cdot(\arelu(\mathbf{z}')-\mathbf{e}_{y'})]\label{eq:ntk}. 
\end{equation}
The quantity $\mathbf{K}(x,x';\boldsymbol\theta)$ was named the \emph{Neural Tangent Kernel} by \citet{DBLP:conf/nips/JacotHG18}. They also showed (see their Theorem~1) that 
$$
\mathbf{K}(x,x';\boldsymbol\theta)\to\mathbf{K}(x,x')\quad\text{as}\quad n_1,\ldots,n_{L-1}\to\infty,
$$
where $\mathbf{K}(x,x')\in\mathbf{R}^{d\times d}$ is the deterministric kernel that does not depend on $\boldsymbol\theta$. This concludes the proof.
\end{proof}

\section{Instructions for Human Annotators}\label{app:annot}
You are shown a reference sentence and several candidate translations. Please indicate, for each, on a 4-point scale, how much of the meaning is represented in the translation, ignoring the language quality.\\~\\
Imagine you are a forgiving reader, ignoring any error that does not prevent you from getting the meaning of the text.
So please ignore language oddities, typographic errors and the like. (This is difficult but key to us!)\\~\\
\textbf{The scale of \emph{meaning preservation} is: 4 = Everything   /  3 = Most  /  2 = Little  /  1 = None}\\~\\
As we are interested in comparing system's output, you can refine your judgement using $+$ or $-$, e.g. 3$+$.\\~\\
When you do not know, simply leave empty.\\~\\
For instance, given the reference sentence 
$$
\text{``This restaurant is beautiful and the staff is very friendly''},
$$
valid judgements for different translations are provided in Table~\ref{tab:eval_ex}.

\begin{table}[htbp]
\begin{center}
\begin{tabular}{l l}
\toprule
Score & Sentence \\
\midrule
  4 &  ``This restaurant is beautiful and the staff is very friendly.''\\
  4 &  ``This restaurant is beautiful and the staff is very friendly..''\\
  4 &  ``Beautiful restaurant, staff is very friendly.''\\
  4$-$ & ``This restaurant is beautiful and the staff is friendly.''\\
  4$-$ & ``Beautiful restaurant, staff is friendly.''\\
  2$+$ & ``Friendly staff''\\
  2  & ``This is a restaurant.''\\
  1  & ``Hello guys!''\\
  1  & ``Bad restaurant''\\
  1$-$ & ``Bad restaurant, bad staff'' \\
\bottomrule
\end{tabular}
\end{center}
\caption{Evaluation example}
\label{tab:eval_ex}
\end{table}

\noindent We insist that evaluating by meaning differs from a natural intuitive evaluation.
Provided the meaning is not impacted, we want to ignore the language quality, the punctuation, the casing.

\clearpage

\section{Translation Examples}\label{app:trans_examples}
\begin{table}[htbp]
    \centering
\begin{small}
    \begin{tabular}{l p{.4\textwidth} p{.4\textwidth}}
    \toprule
        Source & \multicolumn{2}{p{.8\textwidth}}{Flake was the central figure in \emph{Friday's} drama.} \\
        \hdashline
        1.5-entmax & Flèjk byl central'noj figuroj v drame \wrong{pjatnadcatogo veka}. & Flake was the central figure in \wrong{fifteenth century} drama.\\
        1.5-ReLU & Flèjk byl central'noj figuroj v drame \corr{pjatnicy}. & Flake was the central figure in \corr{Friday's} drama.\\
    \midrule
    Source & \multicolumn{2}{p{.8\textwidth}}{There were smiles and blue skies on Saturday (September 29) as the leaders of Turkey and Germany met for \emph{breakfast} in Berlin.} \\
    \hdashline
    1.5-entmax & V subbotu (29 sentjabrja) byli ulybki i goluboe nebo, poskol'ku lidery Turcii i Germanii vstretilis' dlja \wrong{razvala} v Berline. & There were smiles and blue skies on Saturday (September 29) as the leaders of Turkey and Germany met for a \wrong{breakup} in Berlin.\\
    1.5-ReLU & V subbotu (29 sentjabrja) byli ulybki i goluboe nebo, tak kak lidery Turcii i Germanii vstretilis' dlja \zh{otdyha} v Berline. & There were smiles and blue skies on Saturday (September 29) as the leaders of Turkey and Germany met for a \zh{holiday} in Berlin.\\
    \midrule
    Source & \multicolumn{2}{l}{That Was \emph{Really Bad} Body Language:} \\
    \hdashline
    1.5-entmax & Èto byl \corr{dejstvitel'no plohoj} jazyk tela & That was \corr{really bad} body language.\\
    1.5-ReLU & Èto byl \wrong{real'nyj} jazyk tela & That was \wrong{real} body language.\\
    \midrule
    Source & \multicolumn{2}{p{.8\textwidth}}{The city of Palu, which has more than 380,000 people, \emph{was strewn with debris from collapsed buildings}}\\
    \hdashline
    1.5-entmax & Gorod Palu, v kotorom prozhivaet bolee 380 000 chelovek, byl \corr{razrushen zdanijami.} & The city of Palu, home to over 380,000 people, was \corr{destroyed by buildings.} \\
    1.5-ReLU & Gorod Palu, u kotorogo bolee 380 000 chelovek, nahodilsja \wrong{v upadke zdanija.} & The city of Palu, which has over 380,000 inhabitants, was \wrong{in decay building.} \\
    \bottomrule
    \end{tabular}
\end{small}
    \caption{Translation Examples}
    \label{tab:trans_examples}
\end{table}

\end{document}